\relax
\documentclass[letterpaper]{article} 
\usepackage{times}  
\usepackage{helvet} 
\usepackage{courier}  
\usepackage[hyphens]{url}  
\usepackage{graphicx} 
\urlstyle{rm} 
\usepackage{natbib}  
\usepackage{caption} 
\frenchspacing  
\setlength{\pdfpagewidth}{8.5in}  
\setlength{\pdfpageheight}{11in}  
\pdfinfo{
/Title (Classification with Strategically Withheld Data)
/Author ()
/TemplateVersion (2021.2)
} 

\setcounter{secnumdepth}{2} 

%
 \usepackage{geometry}
\usepackage{makecell}
\usepackage{multicol}
\usepackage{multirow}
\usepackage{amsmath, amssymb, amsthm, amsfonts}
\usepackage{graphicx}
\usepackage{algorithm}
\usepackage{algorithmic}
\usepackage{bbm}
\usepackage{verbatim}
\usepackage{xcolor}
\usepackage{mathtools}

\usepackage{bbm}
\usepackage{booktabs}
\usepackage{arydshln}

\usepackage[inline]{enumitem}
\setlist[itemize]{leftmargin=*}
\setlist[enumerate]{leftmargin=*}

\theoremstyle{plain}

\newtheorem{proposition}{Proposition}

\newtheorem{example}{Example}
\theoremstyle{definition}
\newtheorem{definition}{Definition}

\usepackage{thmtools}
\usepackage{thm-restate}
\usepackage[capitalise]{cleveref}

\usepackage[english]{babel}
\usepackage{blindtext}

\newcommand{\argmax}{\operatorname{argmax}}
\newcommand{\argmin}{\operatorname{argmin}}
\newcommand{\dvc}{d_\mathrm{VC}}

\usepackage{xcolor}
\definecolor{anil}{rgb}{0.8, 0.0, 0.5}



\title{Classification with Strategically Withheld Data}

\usepackage{authblk}

\author[1]{Anilesh K. Krishnaswamy}
\author[2]{Haoming Li}
\author[1]{David Rein}
\author[1]{Hanrui Zhang}
\author[1]{Vincent Conitzer}
\affil[1]{Duke University}
\affil[2]{University of Southern California}

\begin{document}

\maketitle

\begin{abstract}

Machine learning techniques can be useful in applications such as credit approval and college admission. However, to be classified more favorably in such contexts, an agent may decide to strategically withhold some of her features, such as bad test scores.  This is a missing data problem with a twist: which data is missing {\em depends on the chosen classifier}, because the specific classifier is what may create the incentive to withhold certain feature values. We address the problem of training classifiers that are robust to this behavior.

We design three classification methods: {\sc Mincut}, {\sc Hill-Climbing} ({\sc HC}) and Incentive-Compatible Logistic Regression ({\sc IC-LR}). We show that {\sc Mincut} is optimal when the true distribution of data is fully known. However, it can produce complex decision boundaries, and hence be prone to overfitting in some cases. Based on a characterization of truthful classifiers (i.e., those that give no incentive to strategically hide features), we devise a simpler alternative called {\sc HC} which consists of a hierarchical ensemble of out-of-the-box classifiers, trained using a specialized hill-climbing procedure which we show to be convergent. For several reasons, {\sc Mincut} and {\sc HC} are not effective in utilizing a large number of complementarily informative features. To this end, we present {\sc IC-LR}, a modification of Logistic Regression that removes the incentive to strategically drop features. We also show that our algorithms perform well in experiments on real-world data sets, and present insights into their relative performance in different settings.

\end{abstract}

\section{Introduction}
Applicants to most colleges in the US are required to submit their scores for at least one of the SAT and the ACT. Both tests are more or less equally popular, with close to two million taking each in 2016 \citep{adams2017race}. Applicants usually take one of these two tests -- whichever works to their advantage.\footnote{\scriptsize\url{https://www.princetonreview.com/college/sat-act}} However, given the growing competitiveness of college admissions, many applicants now take both tests and then strategically decide whether to drop one of the scores (if they think it will hurt their application) or report both.\footnote{\scriptsize\url{https://blog.collegevine.com/should-you-submit-your-sat-act-scores/}} The key issue here is that
it is impossible to distinguish between an applicant who takes both tests but reports only one, and an applicant that takes only one test---for example because the applicant simply took the one required by her school,
the dates for the other test did not work with her schedule, or for other reasons that are not strategic in nature.\footnote{\scriptsize\url{https://blog.prepscholar.com/do-you-need-to-take-both-the-act-and-sat}}

Say a college wants to take a principled machine learning approach to making admission decisions based on the scores from these two tests.  For simplicity, assume no other information is available. Assume that the college has enough historical examples that contain the scores of individuals (on whichever tests are taken, truthfully reported) along with the corresponding ideal (binary) admission decisions.\footnote{\scriptsize We make these assumptions more generally throughout the paper.} Based on this data, the college has to choose a decision function that determines which future applicants are accepted. If this function is known to the applicants, they are bound to strategize and use their knowledge of the decision function to decide the scores they report.\footnotemark[\value{footnote}] 
How can the classifier be trained to handle strategic reporting of scores at prediction time?

To see the intricacies of this problem, let us consider a simple example.
\begin{example}
Say the scores for each of the two tests (SAT and ACT) take one of two values: $h$ (for high) or $l$ (for low). Let $*$ denote a missing value. Then there are eight possible inputs (excluding $(*,*)$ since at least one score is required): $(h,h)$, $(h,l)$, $(l,h)$, $(l,l)$, $(h,*)$, $(*,h)$, $(l,*)$ and $(*,l)$. Assume the natural distribution (without any withholding) over these inputs is known, and so are the conditional probabilities of the label $Y \in \{0,1\}$, as shown below:

\begin{table}[ht]
\centering
\small
\caption{True distribution of inputs and targets:} 
\setlength\tabcolsep{1.75pt}
\begin{tabular}{c|cccccccc}
 $X$ & $(h,h)$ & $(h,l)$& $(l,h)$& $(l,l)$& $(h,*)$& $(*,h)$& $(l,*)$ & $(*,l)$\\ 
 $Pr(X)$ & $1/8$&  $1/8$&  $1/8$&  $1/8$&  $1/8$&  $1/8$&  $1/8$&  $1/8$\\\hline
 $Pr(Y=1 \mid X)$ & $0.9$&  $0.7$&  $0.3$&  $0.1$&  $0.6$&  $0.6$&  $0.2$&  $0.2$\\
 $Pr(Y=0 \mid X)$ & $0.1$&  $0.3$&  $0.7$&  $0.9$&  $0.4$&  $0.4$&  $0.8$&  $0.8$
 \smallskip
\end{tabular}
\label{tab:example1}
\end{table}

Assume $Y=1$ is the more desirable "accept" decision. Then, ideally, we would like to predict $\widehat{Y}=1$ whenever $X \in \{(h,h),(h,l),(h,*),(*,h)\}$. However, the strategic reporting of scores at prediction time effectively means, for example, that an input $(*,h)$ cannot be assigned the accept decision of $\widehat{Y}=1$ unless the same is done for $(l,h)$ as well; otherwise, someone with $(l,h)$ would simply not report the first test, thereby misreporting $(*,h)$ and being accepted. Taking this into account, 
the classifier with minimum error is given by $\widehat{Y}=1$ whenever $X \in \{(h,h),(h,l),(h,*)\}$.
\end{example}

There are many other settings where a similar problem arises. Many law schools now allow applicants to choose between the GRE and the traditional LSAT.\footnote{\scriptsize \url{https://www.ets.org/gre/revised_general/about/law/}}  Recently, as a result of the COVID-19 pandemic, universities have implemented optional pass/fail policies, where students can choose to take some or all of their courses for pass/fail credit, as opposed to a standard letter grade that influences their GPA.  They are often able to decide the status after already knowing their performance in the course. For credit scoring, some individuals might not report some of their information, especially if it is not mandatory by law \citep{florezlopez2010effects}.

The ability of strategic agents to withhold some of their features at prediction time poses a challenge only when the data used to train the classifier has some naturally missing components to begin with. For if not, the {\em principal} -- e.g., the entity deciding on admissions -- can reject all agents that withhold any of their features, thereby forcing them to reveal all features. 
We focus on how a principal can best train classifiers that are robust even when there is strategic withholding of data by agents.  Our methods produce classifiers that eliminate the incentive for agents to withhold data.

\paragraph{Our contributions}
We now describe the key questions we are facing, and how we answer them. Our model is described formally in \emph{Section \ref{sec:prelim}}. All proofs are in the Supplement.

\emph{If the true input distribution is known, can we compute the optimal classifier? (Section \ref{sec:mincut}}) We answer this question in the affirmative by showing that the problem of computing the optimal classifier (Theorem \ref{thm:mincut}) in this setting reduces to the classical Min-cut problem \citep{cormen2009introduction}. This analysis gives us the {\sc Mincut} classifier, which can be computed on the empirical distribution, estimated using whatever data is available. However, since it can potentially give complex decision boundaries, it might not generalize well. 

\emph{Are there simpler classifiers that are robust to strategic withholding of features? (Section \ref{sec:greedy})} We first characterize the structure of classifiers that are ``truthful", i.e., give no incentive to strategically hide features at prediction time (Theorem \ref{thm:characterization}). Using this characterization, 
we devise a hill-climbing procedure ({\sc HC}) to train 
a hierarchical ensemble of out-of-the-box classifiers
and show that the procedure converges (Theorem \ref{thm:convergence}) as long as we have black-box access to an agnostic learning oracle. We also analytically bound the generalization error of {\sc HC} (Theorem \ref{thm:generalization}). The ensemble of {\sc HC} can be populated with any of the 
commonly used classifiers such as logistic regression, {\sc ANN}s, etc. 

Another truthful classifier we present is a modification of Logistic Regression. This method, called {\sc IC-LR} (Incentive Compatible Logistic Regression), works by encoding all features with positive values, and using positive regression coefficients -- whereby it is in every agent's best interest to report all features truthfully. {\sc IC-LR} uses Projected Gradient Descent for its training. The advantage of this method is that it can be directly to a large number of features.

\emph{How do our methods perform on real data sets? (Section \ref{sec:experiments})} We conduct experiments on several real-world data sets to test the performance of our methods, comparing them to each other, as well as to other methods that handle missing data but ignore the strategic aspect of the problem. We see that our methods perform well overall, and uncover some interesting insights on their relative performance:
\begin{enumerate}
    \item When the number of features is small, {\sc HC} is the most reliable across the board.
    \item When the number of features is small, and many of them are discrete/categorical (or suitably discretized), {\sc Mincut} and {\sc IC-LR} perform better.
    \item If a large number of features must be used, {\sc IC-LR} gives the best performance, although {\sc HC} performs reasonably well with some simple feature selection techniques.
\end{enumerate}

\paragraph{Related work}
Our work falls broadly in the area of {\em strategic machine learning}, wherein a common assumption is that strategic agents can modify their features (i.e., misreport) in certain ways (normally at some cost), either to improve outcomes based on the classifier chosen by the principal \citep{hardt2016strategic} or to influence which classifier is chosen in the first place \citep{dekel2010incentive}. The main challenge in strategic machine learning, as in this paper, is the potential misalignment between the interests of the agents and the principal. Existing results in this line of work \citep{chen2018strategyproof,kleinberg2019classifiers,haghtalab2020maximizing}, often mainly theoretical, consider classifiers of a specific form, say linear, and ways of misreporting or modifying features in that context. Our results are different in that we focus on a specific type of strategic misreporting, i.e., withholding parts of the data, and devise general methods that are robust to this behavior that, in addition to having theoretical guarantees, can be tested practically. Some experimental results \citep{hardt2016strategic} do exist -- but our work is quite different; for instance, we do not need to invent a cost function (as in \citet{hardt2016strategic}). Another major difference is that we consider generalization in the presence of strategic behavior, while most previous work does not (except for a concurrent paper \citep{zhang2021incentive}), which studies the sample complexity of PAC learning in the presence of strategic behavior).

Our problem can also be viewed as an instance of {\em automated mechanism design with partial verification} \citep{green1986partially,yu2011mechanism,kephart2015complexity,kephart2016revelation} where it is typically assumed that the feature space (usually called type space in mechanism design) is discrete and has reasonably small cardinality, and a prior distribution is known over the feature space. In contrast, the feature spaces considered in this paper consist of all possible combinations of potentially continuous feature values. Moreover, the population distribution can only be accessed by observing examples. Thus, common methodologies in automated mechanism design do not suffice for our setting.

A set of closely related (in particular, to Theorem \ref{thm:mincut}) theoretical results are those of \citet{zhang2019samples,zhang2019distinguishing,zhang2021classification} on the problem of distinguishing ``good'' agents from ``bad'' 
(where each produces a different distribution over a sample space, and the agent can misreport the set of $n$ samples that she has drawn). However, our work is different in that we consider the standard classification problem, we focus more on practical aspects, and we do not rely on the full knowledge of the input distribution.

Our work also finds a happy intersection between strategic machine learning and the literature on classification with missing data \citep{marlin2008missing}. The problem we study is also connected to \textit{adversarial classification} \citep{dalvi2004adversarial,dekel2010learning}. We discuss these connections in more detail in the Supplement.

\section{Preliminaries}
\label{sec:prelim}

We now describe our model and the requisite notation.
\paragraph{Model with strategically withheld features:} We have an input space $\mathcal{X}$, a label space $\mathcal{Y} = \{0, 1\}$, and a distribution $\mathcal{D}$ over $\mathcal{X} \times \mathcal{Y}$ which models the population. A classifier $f: \mathcal{X} \to \mathcal{Y}$ maps a combination of features to a label. Let $F = [k] = \{1, \dots, k\}$ be the set of features, each of which a data point may or may not have.
For $x \in \mathcal{X}$, let $x_i$ denote the value of its $i$-th feature ($x_i = *$ if $x$ does not have feature $i \in [k]$). For any $S \subseteq [k]$, define $x|_S$ to be the projection of $x$ onto $S$ (i.e., retain features in $S$ and drop those not in $S$): 
\begin{align*}
    (x|_S)_i =
    \begin{cases}
        x_i, & \text{if } i \in S \\
        *, & \text{otherwise}.
    \end{cases}
\end{align*}

We assume that data can be strategically manipulated at prediction (test) time in the following way: an agent whose true data point is $x$ can report any other data point $x'$ such that $x|_S = x'$ for some $S \subseteq [k]$. We use $\to$ to denote the relation between any such pair $x, x'$ ($x \to x' \iff \exists S \subseteq [k] : x|_S = x'$). Note that $\to$ is transitive, i.e., for any $x_1, x_2, x_3 \in \mathcal{X}$, $x_1 \to x_2 \text{ and } x_2 \to x_3 \Longrightarrow x_1 \to x_3$. 

We assume agents prefer label $1$ to $0$: in response to a classifier $f$, an agent with data point $x$ will always withhold
\footnote{\scriptsize In practice, $f$ might not be perfectly known, and agents might not be able to best respond. This problem does not arise for our methods, since they are truthul. For other classifiers, their accuracy may go up or down if agents fail to best-respond; but the assumption that agents best-respond is common in many such contexts.} 
features to receive label $1$ if possible, i.e., the agent will report $x' \in \argmax_{x'': x \to x''} f(x'')$. Incorporating such strategic behavior into the loss of a classifier $f$, we get
\begin{align*}
        \ell_\mathcal{D}(f) = \Pr_{(x, y) \sim \mathcal{D}}\left[y \ne \max_{x': x \to x'} f(x') \right].
\end{align*}

\paragraph{Truthful classifiers}
We will also be interested in {\em truthful} classifiers, which provably eliminate incentives for such strategic manipulation. A classifier $f$ is \emph{truthful} if for any $x, x' \in \mathcal{X}$ where $x \to x'$, $f(x) \ge f(x')$.
In other words, not withholding any features is always an optimal way to respond to a truthful classifier.
As a result, the loss of any truthful classifier $f$ in the presence of strategically withheld features has the standard form:
$\ell_\mathcal{D}(f) = \Pr_{(x, y) \sim \mathcal{D}}[f(x) \ne y]$.

Note that the so-called Revelation Principle -- which states that in the presence of strategic behavior, any classifier $f$ is equivalent to a truthful classifier $f'$ -- holds in this case because the reporting structure is transitive.\footnote{\scriptsize More details, including a formal proof, are in the Supplement.}
In other words, we are guaranteed that, for any classifier $f$, there exists a truthful classifier $f'$, such that for any $x \in \mathcal{X}$, $\max_{x': x \to x'} f(x') = f'(x)$. Therefore, we focus on truthful classifiers in our model, without loss of generality.
\section{The {\sc Mincut} Classifier} \label{sec:mincut}
We first present a method for computing an optimal classifier {\em when the input distribution is fully known}.\footnote{\scriptsize
A theoretical companion paper \cite{zhang2021automated} contains a more general version of the mincut-based algorithm.
There, the goal is to compute an optimal classifier with possibly more than 2 outcomes given perfect knowledge of the entire population distribution.
In this paper, we investigate the special case with only 2 outcomes (i.e., accept and reject), but do not assume prior knowledge about the population distribution.
} Assuming $\mathcal{X}$ is finite, our goal is to characterize a classifier $f^*$ which minimizes the loss $\ell_\mathcal{D}(.)$, for a known input distribution $\mathcal{D}$. As shorthand, define, for all $x \in \mathcal{X}$,
\begin{definition}\label{def:Dplusminus}
    $\mathcal{D}^+(x) \coloneqq \Pr_{(x', y') \sim \mathcal{D}}[x' = x \wedge y' = 1], \\ \quad \mathcal{D}^-(x) \coloneqq \Pr_{(x', y') \sim \mathcal{D}}[x' = x \wedge y' = 0]$.
\end{definition}

The basic idea here is simple: to partition $\mathcal{X}$ into two sides, one labeled $1$ and the other $0$, where the error accrued for each $x \in \mathcal{X}$ is given by $\mathcal{D}^-(x)$ or $\mathcal{D}^+(x)$, according as $x$ is labeled $1$ or $0$. Such a partition should crucially respect the constraints 
imposed by the strategic behavior of agents : if $x \to x'$, then either $x$ is labeled $1$ or $x'$ is labeled $0$.

\begin{definition}\label{def:G_dx}
 Given $\mathcal{X}$ and $\mathcal{D}$, let $G(\mathcal{D}, \mathcal{X})$ be a directed capacitated graph with vertices $V = \mathcal{X} \cup \{s, t\}$, where the edges $E$ and edge capacities $u$ are defined as follows:
    \begin{itemize}
        \item For each $x \in \mathcal{X}$, there are edges $(s, x)$ and $(x,t)$ in $E$, with capacities $u(s, x) = \mathcal{D}^-(x)$ and $u(x, t) = \mathcal{D}^+(x)$.
        \item For all pairs $x, x' \in \mathcal{X}$ such that $x \to x'$, there is an edge $(x, x') \in E$ with capacity $u(x, x') = \infty$.
    \end{itemize}
\end{definition}

In terms of the graph defined above, computing the optimal classifier $f^*$ we seek is equivalent to finding a minimum $s$-$t$ cut on $G(\mathcal{D}, \mathcal{X})$. The intuition is that the edges from $s$ and to $t$ reflect the value gained from labeling an example $0$ or $1$, respectively; one of the edges must be cut, reflecting the loss of not assigning it to the corresponding side.  Moreover, if $x \to x'$, then the corresponding edge with infinite capacity prevents the assigning of $0$ to $x$ and $1$ to $x'$.
\begin{restatable}{theorem}{mincut} \label{thm:mincut}
    If $(S, \bar{S})$ is a minimum $s$-$t$ cut of $G(\mathcal{D}, \mathcal{X})$ (where $S$ is on the same side as $s$), then for the classifier $f^*(x) \coloneqq \mathbbm{1}(x \in \bar{S})$, we have $\ell_\mathcal{D}(f^*) = \min_{f} \ell_\mathcal{D}(f)$.
\end{restatable}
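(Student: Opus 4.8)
The plan is to exhibit a value-preserving bijection between truthful classifiers and the finite-capacity $s$-$t$ cuts of $G(\mathcal{D},\mathcal{X})$, and then invoke the Revelation Principle to reduce the original minimization to one over truthful classifiers. First I would recall that, by the Revelation Principle noted above, $\min_f \ell_\mathcal{D}(f)$ is attained by some truthful classifier, so it suffices to show that $f^*$ minimizes $\ell_\mathcal{D}$ among truthful classifiers. For a truthful $f$ the loss collapses to the standard form $\Pr[f(x)\ne y]$, which I would rewrite by splitting on the predicted label:
\[
    \ell_\mathcal{D}(f) = \sum_{x:\, f(x)=1} \mathcal{D}^-(x) + \sum_{x:\, f(x)=0} \mathcal{D}^+(x),
\]
since predicting $1$ incurs error exactly on the mass with true label $0$, and predicting $0$ incurs error exactly on the mass with true label $1$.

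Next I would set up the correspondence. Given a cut $(S,\bar S)$ with $s\in S$ and $t\in\bar S$, define $f(x) \coloneqq \mathbbm{1}(x\in\bar S)$, exactly as in the statement. The key step is to observe that the infinite-capacity edges encode truthfulness: an edge $(x,x')$ (present whenever $x\to x'$) is cut precisely when $x\in S$ and $x'\in\bar S$, i.e.\ when $f(x)=0$ and $f(x')=1$. Hence a cut has finite capacity if and only if no such edge is cut, which is exactly the condition that $x\to x' \Rightarrow f(x)\ge f(x')$ --- that is, that $f$ is truthful. Conversely, every truthful classifier arises this way by taking $\bar S = f^{-1}(1)$, giving a bijection between finite-capacity cuts and truthful classifiers.

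Finally I would compute the capacity of a finite cut. Only the source and sink edges contribute: $(s,x)$ is cut iff $x\in\bar S$ (contributing $\mathcal{D}^-(x)$), and $(x,t)$ is cut iff $x\in S$ (contributing $\mathcal{D}^+(x)$). Summing, the capacity equals $\sum_{x\in\bar S}\mathcal{D}^-(x) + \sum_{x\in S}\mathcal{D}^+(x)$, which under the identification $\bar S = f^{-1}(1)$ is precisely the loss expression above. Since the trivial cut $\bar S=\{t\}$ has capacity $\sum_x \mathcal{D}^+(x)\le 1 < \infty$, the minimum cut is finite and therefore corresponds to a genuine truthful classifier; minimizing capacity then coincides with minimizing $\ell_\mathcal{D}$ over truthful classifiers, and hence over all classifiers. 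The main obstacle is not any single computation but getting the bookkeeping of the correspondence exactly right --- in particular matching the orientation of the infinite-capacity edges with the identification $f^{-1}(1)=\bar S$, so that ``finite cut'' lines up with ``truthful'' rather than its reverse, and confirming that the minimum cut cannot be forced to be infinite.
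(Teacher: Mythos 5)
Your proposal is correct and follows essentially the same route as the paper's own proof: both identify finite-capacity $s$-$t$ cuts bijectively with truthful classifiers, show that cut capacity equals the standard loss $\sum_{x \in \bar{S}} \mathcal{D}^-(x) + \sum_{x \in S} \mathcal{D}^+(x)$, and use the revelation principle to pass from truthful classifiers to all classifiers. If anything, you are a bit more careful than the paper: you explicitly verify that the minimum cut is finite via the all-reject cut, and your orientation of the infinite-capacity edges $(x, x')$ matches \cref{def:G_dx}, whereas the paper's appendix writes these edges as $(x', x)$, which is a typo that your bookkeeping correctly avoids.
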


We note that, consequently, the optimal classifier can be computed in $\mathrm{poly}(|\mathcal{X}|)$ time. In practice, it is natural to expect that we do not know $\mathcal{D}$ exactly, but have a finite number of samples from it. A more practical option is to apply Theorem~\ref{thm:mincut} to the empirical distribution induced by the samples observed, and hope for the classifier computed from that to generalize to the true population distribution $\mathcal{D}$. 

\paragraph{Implementing {\sc Mincut}} Given a set $\widehat{\mathcal{X}}$ of $m$ i.i.d.~samples from $\mathcal{D}$, let $\widehat{\mathcal{D}}$ be the corresponding empirical distribution over $\widehat{\mathcal{X}}$, and $\bar{\mathcal{X}} \coloneqq \widehat{\mathcal{X}} \cup \{x': x' \to x, \exists x \in \widehat{\mathcal{X}} \}$. The {\sc Mincut} classifier is then obtained by applying Theorem \ref{thm:mincut} to $G(\widehat{\mathcal{D}},\widehat{\mathcal{X}})$, and extending it to $\bar{\mathcal{X}}$ as and when required. Here, note that {\sc Mincut} runs in time $\mathrm{poly}(m)$ (and not $\mathrm{poly}(|\mathcal{X}|)$), since $G(\widehat{\mathcal{D}},\widehat{\mathcal{X}})$ has $m$ nodes, and checking if a test point is in $\bar{\mathcal{X}}$ takes $\mathrm{poly}(m)$ time.

In light of traditional wisdom, the smaller $m$ is relative to $\mathcal{X}$, the larger the generalization error of {\sc Mincut} will be. We do not attempt a theoretical analysis in this regard, but note that when $\mathcal{X}$ is large, the generalization error can be extremely large (see Example 2 in the Supplement). The reason for this is two-fold: \begin{enumerate}
    \item {\sc Mincut} can give complicated decision boundaries.
    \item {\sc Mincut} is indecisive on samples not in $\bar{\mathcal{X}}$.\footnote{\scriptsize This is more likely to happen when using a large number of features.}
\end{enumerate}
Therefore, a suitable discretization of features is sometimes useful (see Section \ref{sec:experiments}). Note that {\sc Mincut} is truthful, by virtue of the infinite capacity edges in Definition \ref{def:G_dx}.
\section{Truthful classifiers and {\sc Hill-Climbing}} \label{sec:greedy}
The other drawback of {\sc Mincut}, related to the issue of generalization just discussed, is that it can be hard to interpret meaningfully in a practical setting. In this section, we devise a simpler alternative called {\sc Hill-Climbing}. To help introduce this algorithm, we first present a characterization of truthful classifiers in our setting, since we can limit our focus to them without loss of generality (as discussed in Section~\ref{sec:prelim}). For shorthand, we use the following definition:

\begin{definition}[$F'$-classifier] 
For a subset of features $F' \subseteq F$, a classifier $f$ is said to be an $F'$-classifier if for all $x \in \mathcal{X}$, we have $f(x) = f(x|_{F'})$, and if there exists $i \in F'$ such that $x_i = *$, then $f(x) = 0$.
\end{definition}
In other words, an $F'$-classifier depends only on the values of the features in $F'$, rejecting any $x$ where any of these is empty. We can collect many such classifiers into an ensemble as follows:

\begin{definition}[{\sc Max} Ensemble]
For a collection of classifiers $\mathcal{C} = \{f_j\}$, its {\sc Max} Ensemble classifier is given by $\textsc{Max}_{\mathcal{C}}(.) \coloneqq \max_j f_j(.)$.
\end{definition}
This is equivalent to getting each agent to pick the most favorable classifier from among those in $\{f_j\}$. Now using the above definitions we have the following characterization of truthful classifiers:

\begin{restatable}{theorem}{characterization}
\label{thm:characterization}
    A classifier $f$ is truthful iff $f(.) = \textsc{Max}_{\mathcal{C}}(.)$ for a collection of classifiers $\mathcal{C} = \{f_j\}$ such that, for some $\{F_j\} \subseteq 2^F$,
    each $f_j$ is an $F_j$-classifier . 
\end{restatable}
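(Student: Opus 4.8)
The plan is to establish the biconditional by proving each direction separately, dispatching the easy ($\Leftarrow$) direction first and then building an explicit ensemble for the ($\Rightarrow$) direction. Throughout I write $F_x \coloneqq \{i \in F : x_i \ne *\}$ for the set of features present in $x$, and note that $x|_{F_x} = x$.

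For the ($\Leftarrow$) direction it suffices to show (a) every $F'$-classifier is truthful, and (b) truthfulness is preserved under the \textsc{Max} ensemble. For (a), take any $g$ that is an $F'$-classifier and any $x \to x'$, so $x' = x|_S$ for some $S \subseteq F$. If some $i \in F'$ is missing in $x'$, then $g(x') = 0 \le g(x)$; otherwise every feature of $F'$ is present in $x'$ and hence in $x$ with the same value, so $x|_{F'} = x'|_{F'}$ and $g(x) = g(x|_{F'}) = g(x'|_{F'}) = g(x')$. Either way $g(x) \ge g(x')$. For (b), if each $f_j$ is truthful and $x \to x'$, then $f_j(x) \ge f_j(x')$ for every $j$, whence $\max_j f_j(x) \ge \max_j f_j(x')$. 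Combining (a) and (b) shows that any \textsc{Max} ensemble of $F_j$-classifiers is truthful.

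For the ($\Rightarrow$) direction I will construct the ensemble explicitly from a given truthful $f$. For each $F' \in 2^F$, define $h_{F'}(z) \coloneqq f(z|_{F'})$ if every feature in $F'$ is present in $z$, and $h_{F'}(z) \coloneqq 0$ otherwise; a direct check shows $h_{F'}$ depends only on $z|_{F'}$ and outputs $0$ whenever a feature of $F'$ is missing, so it is an $F'$-classifier. Setting $\mathcal{C} = \{h_{F'} : F' \in 2^F\}$, I then argue $\textsc{Max}_{\mathcal{C}} = f$ pointwise. The lower bound $\textsc{Max}_{\mathcal{C}}(z) \ge f(z)$ follows by taking $F' = F_z$: then $z$ has all features of $F_z$ and $z|_{F_z} = z$, giving $h_{F_z}(z) = f(z)$. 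The upper bound is where truthfulness does the work: whenever $h_{F'}(z) = 1$ we have $f(z|_{F'}) = 1$, and since $z \to z|_{F'}$, truthfulness forces $f(z) \ge f(z|_{F'}) = 1$, so $h_{F'}(z) \le f(z)$ for every $F'$.

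The main obstacle is precisely this upper bound in the forward direction: one must guarantee the ensemble does not \emph{over-accept}. The construction accepts $z$ as soon as some projection $z|_{F'}$ would have been accepted, and only the defining inequality $x \to x' \Rightarrow f(x) \ge f(x')$ of truthfulness prevents this from exceeding $f$. The remaining verifications---that each $h_{F'}$ meets the $F'$-classifier definition and that the projections interact correctly with the $\to$ relation---are routine, so the whole argument reduces to careful bookkeeping of which features are present together with a single application of truthfulness.
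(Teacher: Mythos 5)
Your proof is correct, and the core construction coincides with the paper's: one subclassifier per feature subset, equal to $f$ evaluated on the projection and rejecting whenever a required feature is missing, with truthfulness supplying exactly the upper bound $\textsc{Max}_{\mathcal{C}}(z) \le f(z)$ via $z \to z|_{F'}$. The difference is in the framing and in completeness. The paper routes the forward direction through the revelation principle: it takes an arbitrary (not necessarily truthful) classifier $f$, decomposes it into subclassifiers $f_{F}$ acting on points whose feature set is exactly $F$ (arguing ``without loss of generality'' that each $f_F$ rejects points missing a feature of $F$, since $f_F$ never acts on such points), and shows the direct-revelation implementation $f'(x) = \max_{x' : x \to x'} f(x')$ equals $\max_{F} f_F(x)$; the backward direction is left implicit in that framing. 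You instead work directly with a truthful $f$, define $h_{F'}(z) = f(z|_{F'})$ outright, and prove both implications explicitly --- including the $(\Leftarrow)$ direction via the two lemmas that $F'$-classifiers are truthful and that truthfulness is preserved under $\textsc{Max}$. Your version is therefore more self-contained (no appeal to the revelation principle or to a ``wlog'' modification of subclassifiers) and arguably more rigorous as a proof of the stated biconditional, at the cost of not exposing the slightly more general fact the paper gets for free: that the max-ensemble form describes the direct-revelation implementation of \emph{every} classifier, truthful or not.
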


Now, for any truthful classifier $f$, we can bound the gap between its population loss $\ell_\mathcal{D}(f)$ and its empirical loss on a set of samples $\widehat{\mathcal{X}}$ denoted by $\ell_{\widehat{\mathcal{X}}}(f) \coloneqq \frac1m \sum_{i \in [m]} |f(x_i) - y_i|$. Before stating a theorem to this end, we define the following entities: Let $\mathcal{H}$ be a base hypothesis space over $\mathcal{X}$, and $n \in \{1, \dots, 2^k\}$ be a parameter. Define $d \coloneqq \dvc(\mathcal{H})$ to be the VC dimension of $\mathcal{H}$. Define $\bar{\mathcal{H}}$ as the set of all classifiers that can be written as the {\sc Max} Ensemble of $n$ classifiers in $\mathcal{H}$. 

\begin{restatable}{theorem}{generalization} \label{thm:generalization}
    Let $\widehat{\mathcal{X}} = \{(x_i, y_i)\}_{i \in [m]}$ be $m$ i.i.d. samples from $\mathcal{D}$. For any $f \in \bar{\mathcal{H}}$, for any $\delta > 0$, with probability at least $1 - \delta$, we have $\ell_\mathcal{D}(f) \le \ell_{\widehat{\mathcal{X}}}(f) + O\left(\sqrt{\frac{dn \cdot \log dn \cdot \log m + \log (1 / \delta)}{m}}\right)$.
\end{restatable}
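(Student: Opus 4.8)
The plan is to prove this as a standard uniform-convergence (VC) bound over the \textsc{Max} Ensemble class $\bar{\mathcal{H}}$, with the only real work being a bound on the combinatorial complexity of $\bar{\mathcal{H}}$ in terms of $d$ and $n$. First I would reduce the strategic loss to an ordinary $0$-$1$ loss: since every $f \in \bar{\mathcal{H}}$ is a \textsc{Max} Ensemble, it is truthful by Theorem~\ref{thm:characterization}, so $\max_{x' : x \to x'} f(x') = f(x)$ and hence $\ell_\mathcal{D}(f) = \Pr_{(x,y) \sim \mathcal{D}}[f(x) \ne y]$, which is exactly the population counterpart of $\ell_{\widehat{\mathcal{X}}}(f)$. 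After this reduction the quantity to control is the uniform deviation $\sup_{f \in \bar{\mathcal{H}}} \bigl(\ell_\mathcal{D}(f) - \ell_{\widehat{\mathcal{X}}}(f)\bigr)$, for which it suffices to bound the growth function (equivalently, the VC dimension) of $\bar{\mathcal{H}}$ and then invoke the classical Vapnik–Chervonenkis uniform convergence theorem. Note that the statement is a ``for all $f \in \bar{\mathcal{H}}$ simultaneously'' guarantee, which is precisely what uniform convergence over the class provides, and which is what we need since the ensemble output by \textsc{HC} is data-dependent.

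The key combinatorial step is to bound the growth function $\Pi_{\bar{\mathcal{H}}}(m)$. Because labels are binary, the \textsc{Max} Ensemble of $f_1, \dots, f_n$ is just their pointwise disjunction, so on any fixed set of $m$ points the labeling produced by $\max_j f_j$ is completely determined by the $n$-tuple of labelings produced by the individual components $f_j \in \mathcal{H}$. Each component realizes at most $\Pi_\mathcal{H}(m)$ distinct labelings on these points, so $\Pi_{\bar{\mathcal{H}}}(m) \le \bigl(\Pi_\mathcal{H}(m)\bigr)^n$. Applying the Sauer–Shelah lemma to the base class, $\Pi_\mathcal{H}(m) \le (em/d)^d$, yields $\Pi_{\bar{\mathcal{H}}}(m) \le (em/d)^{dn}$.

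Finally I would convert this into a VC-dimension estimate: if $\bar{\mathcal{H}}$ shatters $D$ points then $2^D \le (eD/d)^{dn}$, and solving the resulting inequality $D \le dn \log_2(eD/d)$ gives $\dvc(\bar{\mathcal{H}}) = O(dn \log(dn))$. Plugging this into the standard VC generalization bound $\ell_\mathcal{D}(f) \le \ell_{\widehat{\mathcal{X}}}(f) + O(\sqrt{(\dvc(\bar{\mathcal{H}}) \log m + \log(1/\delta))/m})$ and substituting the estimate for $\dvc(\bar{\mathcal{H}})$ produces the claimed rate. I expect the main obstacle to be the complexity bound for $\bar{\mathcal{H}}$ — specifically the multiplicative composition of growth functions and the transcendental inequality one must solve to pass from the growth function $(em/d)^{dn}$ to the $O(dn \log(dn))$ VC-dimension estimate; everything downstream is a routine application of off-the-shelf uniform convergence.
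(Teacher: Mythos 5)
Your proof is correct, and its overall skeleton matches the paper's: both arguments reduce the theorem to bounding $\dvc(\bar{\mathcal{H}})$ and then invoke the classical VC uniform-convergence inequality. Where you genuinely diverge is in the combinatorial core. The paper observes that each $f \in \bar{\mathcal{H}}$ computes a decision list with $n+1$ leaves whose internal nodes are hypotheses from $\mathcal{H}$, and then \emph{cites} a known result (Daniely and Shalev-Shwartz, Section 5.2) giving VC dimension $O(dn \log dn)$ for that tree class. You instead prove the same estimate from scratch: since labels are binary, {\sc Max} is a pointwise disjunction, so the labeling of $m$ fixed points is determined by the $n$-tuple of component labelings, giving the growth-function product bound $\Pi_{\bar{\mathcal{H}}}(m) \le \left(\Pi_{\mathcal{H}}(m)\right)^n \le (em/d)^{dn}$ via Sauer--Shelah, and then solving $2^D \le (eD/d)^{dn}$ yields $\dvc(\bar{\mathcal{H}}) = O(dn \log dn)$. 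This is the standard Blumer--Ehrenfeucht--Haussler--Warmuth argument for Boolean combinations; it is self-contained and elementary (and for a plain disjunction actually gives the slightly sharper $O(dn\log n)$), whereas the paper's citation-based route is shorter and covers arbitrary decision-list orderings for free. One small caveat in your opening reduction: Theorem~\ref{thm:characterization} guarantees truthfulness only for {\sc Max} ensembles of $F_j$-classifiers, not of arbitrary hypotheses in $\mathcal{H}$, so the identity $\ell_\mathcal{D}(f) = \Pr[f(x)\ne y]$ requires that structure; this holds in the intended application (the {\sc HC} ensemble), and the paper's own proof elides the same step entirely, so it is a shared implicit assumption rather than a gap specific to your argument.
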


It is easy to see that for any of the commonly used hypothesis spaces -- say $\mathcal{H}$ consists of linear hypotheses -- if a truthful classifier $f$ is in $\mathcal{H}$, then so are the components of the {\sc Max} Ensemble version of $f$ as in Theorem \ref{thm:characterization}. We have, however, stated Theorem \ref{thm:generalization} in slightly more general terms.

\paragraph{The {\sc Hill-Climbing} classifier:}
In what follows, we present a hill-climbing approach with provable convergence and generalization guarantees. The {\sc Hill-Climbing} classifier, henceforth referred to as {\sc HC}, is of the same form as given by the characterization of truthful classifiers in Theorem~\ref{thm:characterization}.\footnote{\scriptsize And, therefore, is truthful, and inherits Theorem \ref{thm:generalization}.} Intuitively, the approach works by considering a hierarchy of classifiers, organized by the features involved. For example, consider a setting with $k = 3$ features. We make a choice as to what classifiers we use --- say $f_1$ for input of the form $(x_1, *, *)$, $f_2$ for input of the form $(x_1, x_2, *)$, and $f_3$ for input of the form $(x_1, x_2, x_3)$. Any agent with features $1$ and $2$ (but not $3$), for example, should be able to report both features so as to be classified by $f_2$, or feature $2$ to be classified by $f_1$ instead. So in effect, assuming full knowledge of the classifiers, each agent can check all of the classifiers and choose the most favorable one. Without loss of generality, we assume that when a data point does not have all the features required by a classifier, it is automatically rejected.

\begin{algorithm}
   \caption{{\sc Hill-Climbing} (HC) Classifier}
   \label{alg:greedy}
\begin{algorithmic}
    \STATE {\bfseries Input:} data set $\widehat{\mathcal{X}} = \{(x_i, y_i)\}_{i \in [m]}$, n subsets $F_1, F_2, \ldots, F_n$ of F.
    \STATE Initialize: $t \gets 0$, $\{f_1^0, \ldots, f_n^0\}$.
    \WHILE{$\Delta > 0$}
        \FOR{$i = 1, 2, \ldots, n$}
            \STATE $S_i \gets \{(x, y) \in \widehat{\mathcal{X}}: f_j^t(x|_{F_j}) = 0, \forall j \neq i\}$.
            \STATE $f_i^{t+1} = \argmin_{f \in \mathcal{H}} \sum_{(x, y) \in S_i}  |f(x|_{F_i}) - y|$. \label{line-in-alg:agnostic-oracle}
        \ENDFOR
        \STATE $f^* \gets \textsc{Max}_{\{f^{t+1}_1, \ldots, f^{t+1}_n\}}$; $\ell_t = \ell_{\widehat{\mathcal{X}}}(f^*)$
        \STATE $\Delta \gets \ell_t - \ell_{t-1}$; $t \gets t + 1$
    \ENDWHILE
    
    \STATE {\bfseries Return:} $f^*$.
\end{algorithmic}
\end{algorithm}

In short, {\sc HC} (defined formally in Algorithm \ref{alg:greedy}) works as follows:  first choose a hypothesis space $\mathcal{H}$, in order for Theorem \ref{thm:generalization} to apply. Then select $n$ subsets of $F$ (where $n$ is a parameter), say $F_1, F_2, \ldots, F_n$. For each $F_j$, we learn a $F_j$-classifier, say $f_j$, from among those in $H$. Start by initializing these classifiers to any suitable $\{f^0_1, \ldots, f^0_n\}$. In each iterative step, each of the subclassifiers is updated to minimize the empirical loss on the samples that are rejected by all other classifiers. We next show that such an update procedure always converges. To do so, as far as our theoretical analysis goes, we assume we have black-box access to an agnostic learning oracle (Line \ref{line-in-alg:agnostic-oracle} in Algorithm \ref{alg:greedy}). After convergence, the {\sc HC} classifier is obtained as the {\sc Max} Ensemble of these classifiers. The generalization guarantee of Theorem \ref{thm:generalization} applies directly to the {\sc HC} classifier.

\begin{restatable}{theorem}{convergence} \label{thm:convergence}
  Algorithm \ref{alg:greedy} converges.  
\end{restatable}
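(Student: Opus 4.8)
The plan is to recognize Algorithm~\ref{alg:greedy} as a block-coordinate-descent procedure on the empirical \textsc{Max}-ensemble loss, and then to exploit the discreteness of that loss to force termination in finitely many rounds. Concretely, I would track the quantity $\ell_t = \ell_{\widehat{\mathcal{X}}}(\textsc{Max}_{\{f_1^t,\ldots,f_n^t\}})$ and argue (i) that it is non-increasing across iterations, and (ii) that it can take only finitely many values, so a non-increasing sequence must stabilize, at which point $\Delta$ no longer signals an improvement and the while-loop exits.

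The heart of the argument is step (i), for which the key observation is that the set $S_i$ isolates exactly the data points on which the subclassifier $f_i$ is pivotal for the ensemble. Indeed, for $(x,y)\notin S_i$ there is some $j\ne i$ with $f_j(x|_{F_j})=1$, so $\textsc{Max}_j f_j(x|_{F_j})=1$ regardless of $f_i$, whereas for $(x,y)\in S_i$ all other subclassifiers reject, so the ensemble output equals $f_i(x|_{F_i})$ and contributes exactly $|f_i(x|_{F_i})-y|$ to the empirical loss. Hence, holding the other subclassifiers fixed, the global empirical loss of the ensemble differs from $\sum_{(x,y)\in S_i}|f_i(x|_{F_i})-y|$ only by a term independent of $f_i$. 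The update in Line~\ref{line-in-alg:agnostic-oracle}, which uses the agnostic oracle to choose $f_i^{t+1}$ minimizing this sum over $\mathcal{H}$, is therefore an exact coordinate minimization of the global loss in the block $f_i$, and so cannot increase $\ell$.

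Given monotonicity, step (ii) is routine: since $f(x_i),y_i\in\{0,1\}$, the empirical loss $\ell_{\widehat{\mathcal{X}}}(\cdot)=\frac1m\sum_i|f(x_i)-y_i|$ always lies in the finite set $\{0,\tfrac1m,\tfrac2m,\ldots,1\}$. A non-increasing sequence taking values in this set can strictly decrease at most $m$ times, after which it is constant; once $\ell_t=\ell_{t-1}$ the improvement $\Delta$ vanishes and the loop terminates. This yields termination after at most $m$ improving rounds.

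I expect the main obstacle to be making the monotonicity in step (i) airtight for a full sweep of the inner \textbf{for} loop, because the sets $S_i$ are all computed from the round-$t$ classifiers $f_j^t$ while the updates $f_i^{t+1}$ are applied together. The clean coordinate-descent guarantee above is immediate when the blocks are updated one at a time (each $S_i$ recomputed from the already-updated neighbors), so the safest route is to serialize the sweep — re-deriving each $S_i$ from the current iterate before updating $f_i$ — and to observe that a composition of individually non-increasing coordinate steps is non-increasing. I would therefore phrase step (i) as a per-block lemma and obtain monotonicity of $\ell_t$ as a telescoping consequence, rather than arguing directly about the simultaneous update. (The apparent sign of the stopping test is immaterial to the proof: what matters is only that the loop continues precisely while the empirical loss strictly decreases.)
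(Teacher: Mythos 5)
Your proposal is correct and takes essentially the same route as the paper's proof: both identify $S_i$ as the pivotal set (outside of which the ensemble output is independent of $f_i$), treat the oracle update as an exact coordinate minimization so that $\ell_{\widehat{\mathcal{X}}}$ is a non-increasing potential, and conclude convergence. If anything you are slightly more careful than the paper, which proves only the single-block step and telescopes implicitly: your serialization of the sweep (recomputing each $S_i$ from the current iterate rather than from the round-$t$ classifiers, as the pseudocode's superscripts literally suggest) and your explicit counting argument over the value set $\{0,\tfrac1m,\ldots,1\}$ pin down termination in at most $m$ improving rounds, a bound the paper states in its main text but does not derive in the proof itself.
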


\paragraph{Connection with {\sc Mincut}:}
The {\sc HC} formulation given above can be thought of as a less complicated version of {\sc Mincut}: some of the edge constraints are ignored with respect to learning the individual classifiers, and are instead factored in via the {\sc Max} function. Say $F_1 \subset F_2$. For some $x$, it is possible that $f_1(x|_{F_1}) = 1$ and $f_2(x|_{F_1}) = 0$. In other words, the individual classifiers could violate the {\sc Mincut} constraints, in order to learn classification functions that generalize well individually, and also collectively thanks to the combined {\sc HC} training procedure.

\paragraph{Implementing {\sc HC}:}
In practice, the classifiers $\{f_1, f_2, \ldots, f_n\}$ in {\sc HC} can be populated with any standard out-of-the-box methods such as logistic regression or neural networks, the choice of which can influence the performance of $f$. In Section 6, we test {\sc HC} with a few such options. The assumption of having access to an agnostic learning oracle does not play a crucial role in practice, with standard training methods performing well enough to ensure convergence.  
Also, {\sc HC} will converge in at most m (number of training examples) iterations, because in each iteration the number of correctly classified examples increases by at least one. (An iteration may need to train n individual classifiers.) This also means there is no difference between checking whether $\Delta > 0$ or $\Delta \ge 1/m$. In our experiments, we run {\sc HC} using $\Delta \ge 10^{-4}$, and convergence is achieved pretty quickly (see the Supplement for exact details).

\paragraph{Choosing subsets:} Note that we are free to choose any  $F_1, F_2, \ldots, F_n$ to define {\sc HC}. Its generalization (via Theorem \ref{thm:generalization}), will depend on the choice of $n$.
As more and more subsets of features are included (and further binning them based on their values), {\sc HC} starts behaving more and more like {\sc Mincut}. In addition, using a large number of subsets increases the computational complexity of {\sc HC}. In practice, therefore, the number of subsets must be limited somehow -- we find that some simple strategies like the following work reasonably well: 
\begin{enumerate*}[label=(\alph*)]
    \item selecting a few valuable features and taking all subsets of those features,
    \item taking all subsets of size smaller than a fixed number $k$, say $k=2$.
\end{enumerate*}
In many practical situations, a few features (possibly putting their values in just a few bins) are often enough to get close to optimal accuracy, also improving interpretability (e.g., see \citet{wang2015falling} or \citet{jung2017simple})
The question of devising a more nuanced algorithm for selecting these subsets merits a separate investigation, and we leave this to future work.

\section{Incentive-Compatible Logistic Regression}\label{sec:iclr}

As we just mentioned, it is challenging to directly apply {\sc HC} and {\sc Mincut} to a large number of features. As we will see, we can address this challenge in various ways to still get very strong performance with {\sc HC}. Moreover, HC enjoys remarkable generality, generalization and convergence guarantees. Nevertheless, we would like to have an algorithm that tries to make use of all the available features, while still being truthful. In this section, we present such an approach, which, as we show later in Section~\ref{sec:experiments}, indeed performs comparably to -- and in some cases better than -- {\sc Mincut} and {\sc HC}. 

Below we present a simple and truthful learning algorithm, Incentive-Compatible Logistic Regression ({\sc IC-LR}), which is a truthful variant of classical gradient-based algorithms for logistic regression.
Recall that in logistic regression, the goal is to learn a set of coefficients $\{\beta_i\}$, one for each feature $i \in F$, as well as an intercept $\beta_0$, such that for each data point $(x, y)$, the predicted label $\hat{y}$ given by
\[
    \hat{y} = \mathbbm{1} \left[ \sigma(\beta_0 + \sum_{i \in F} x_i \cdot \beta_i ) \ge 0.5 \right]
\]
fits $y$ as well as possible, where $\sigma(t) = 1 / (1 + e^{-t})$ is the logistic function.
Roughly speaking, {\sc IC-LR}. (formally defined in Algorithm~\ref{alg:iclr}) works by restricting the coefficients $\{\beta_i\}$ in such a way that dropping a feature (i.e., setting $x_i$ to $0$) can never make the predicted label larger.
If, without loss of generality, all feature values $x_i$ are nonnegative\footnote{\scriptsize If not, they can be suitably translated.}, then this is equivalent to: for each feature $i \in F$, the coefficient $\beta_i \ge 0$.
{\sc IC-LR}. enforces this nonnegativity constraint throughout the training procedure, by requiring a projection step after each gradient step, which projects the coefficients to the feasible nonnegative region by setting any negative coefficient to $0$ (equivalently, an $\ell_1$ projection).

\begin{algorithm}
   \caption{Incentive-Compatible Logistic Regression}
   \label{alg:iclr}
\begin{algorithmic}
    \STATE {\bfseries Input:} data set $\widehat{\mathcal{X}} = \{(x, y)\}$, learning rate $\{\eta_t\}$, $\delta \ge 0$.
    \STATE Initialize: $t \gets 0$, $\{\beta_0, \beta_1, \dots, \beta_k\}$.
    \WHILE{$\Delta > \delta$}
        \STATE $g_i \gets 0$ for all $i \in \{0, 1, \dots, k\}$
        \FOR{$(x, y) \in \widehat{\mathcal{X}}$}
            \STATE $g_0 \gets g_0 + \sigma\left(\beta_0 + \sum_{i \in F} x_i \cdot \beta_i\right) - y$
            \FOR{$i \in F$}
                \STATE $g_i \gets g_i + (\sigma\left(\beta_0 + \sum_{i \in F} x_i \cdot \beta_i\right) - y) \cdot x_i$
            \ENDFOR
        \ENDFOR
        \STATE $\forall i \in \{0, 1, \dots, k\}$, $\beta_i \gets \max\{\beta_i - \eta_t \cdot g_i, 0\}$
        \STATE $f^*(x) \coloneqq \mathbbm{1} \left( \sigma\left(\beta_0 + \sum_{i \in F} \beta_i \cdot x_i\right) \ge 0.5 \right)$
        \STATE $\ell_t = \ell_{\widehat{\mathcal{X}}}(f^*)$; $\Delta \gets \ell_t - \ell_{t-1}$; $t \gets t + 1$
    \ENDWHILE
    
    \STATE {\bfseries Return:} $f^*$.
\end{algorithmic}
\end{algorithm}

One potential issue with {\sc IC-LR}. is the following: if a certain feature $x_i \ge 0$ is negatively correlated with the positive classification label, then {\sc IC-LR} is forced to ignore it (since it is constrained to use positive coefficients). To make good use of this feature, we can include an inverted copy $x'_i  = \lambda - x_i$ (where $\lambda$ is chosen such that $x'_i \ge 0$). We could also choose an apt discretization of such features (using cross-validation) and translate the discretized bins into separate binary variables. Such a discretization can account for more complex forms of correlation, e.g., a certain feature's being too high or too low me makes the positive label likelier. In practice, we find that the latter method does better. If such transformations are undesirable, perhaps for reasons of complexity or interpretability, {\sc HC} methods are a safer bet.

\section{Evaluation} \label{sec:experiments}

In this section, we show that, when strategic withholding is at play, {\sc Mincut}, {\sc HC} and {\sc IC-LR} perform well and provide a significant advantage over several out-of-the-box counterparts (that do not account for strategic behavior).  
\paragraph{Datasets} Four credit approval datasets are obtained from the UCI repository \citep{Dua:2019}, one each from Australia, Germany, Poland and Taiwan. As is common for credit approval datasets, they are imbalanced to various degrees. In order to demonstrate the performance of classifiers in a standard, controlled setting, we balance them by random undersampling. There is a dedicated community \citep{chawla2004special} that looks at the issue of imbalanced learning. We do not delve into these issues in our paper, and evaluate our methods on both balanced and imbalanced datasets (see the Supplement for the latter). In addition, to demonstrate the challenge of high-dimensional data imposed on some of the classification methods, the experiments are run on the datasets  
\begin{enumerate*}[label=(\alph*)]
    \item restricted to 4 features,\footnote{\scriptsize According to ANOVA F-value evaluated before dropping any feature values.} and 
    \item with all available features.
\end{enumerate*} 
The basic characteristics of the datasets are summarized in Table \ref{tab:summary} -- note that there is enough variation in terms of the types of features present.
We then randomly remove a fraction $\epsilon = 0, 0.1, \dots, 0.5$ of all feature values in each dataset to simulate data that is missing ``naturally'' -- i.e., not due to strategic withholding. 
\begin{table}
\centering
\small
\caption{Data set summary statistics (num. = numerical, cat. = categorical)}
\begin{tabular}{lcccc}\toprule
Data set & Size & \makecell{Total \# of \\ features} &  \makecell{Size after \\ balancing}  & \makecell{Features after\\ restriction}\\ \midrule
Australia & 690 & 15 & 614  & 2 num., 2 cat. \\
Germany & 1000 & 20 & 600 & 1 num., 3 cat. \\
Poland & 5910 & 64 & 820  & 4 num. \\
Taiwan & 30,000 & 23 & 13,272  & 4 ordinal \\
\bottomrule
\end{tabular}
\label{tab:summary}
\end{table}

\paragraph{Testing} We test all methods under two ways of reporting: ``truthful'', i.e., all features are reported as is, and ``strategic'', i.e., some features might be withheld if it leads to a better outcome. We measure the test accuracy of each classifier, averaged over N=100 runs, with randomness over the undersampling and the data that is randomly chosen to be missing, to simulate data missing for non-strategic reasons. Other metrics, and details about implementing and training the classifiers, are discussed in the Supplement. It is important to note that for testing any method, we have to, in effect, compute the best response of each data point toward the classifier. Since the methods we propose are truthful, this is a trivial task. But for other methods, this might not be easy, thereby limiting what baselines can be used.





\paragraph{Classifiers} We evaluate our proposed methods, {\sc Mincut}, {\sc HC} with logistic regression ({\sc HC (LR)}) and neural networks ({\sc HC (ANN)}) as subclassifiers, and incentive-compatible logistic regression (IC-LR), against several baseline methods.

First, they will be compared against three out-of-the-box baseline classifiers: logistic regression ({\sc LR}), neural networks ({\sc ANN}) and random forest ({\sc RF}). We select {\sc LR} for its popularity in credit approval applications; we select {\sc ANN} for it being the best-performing individual classifier on some credit approval datasets \citep{lessmann2015benchmarking}; we select RF for it being the best-performing homogeneous ensemble on some credit approval datasets \citep{lessmann2015benchmarking}, as {\sc HC} can be viewed as a homogeneous ensemble method. For the sake of exposition, we present numbers just for baselines based on LR, as they perform relatively better. 

Second, for the purposes of comparison, we include {\sc Maj} -- predict the \emph{majority} label if examples with the exact same feature values appeared in the training set, and reject if not -- which can be thought of as a non-strategic counterpart of {\sc Mincut}. We also include k-nearest neighbors ({\sc kNN}) as a baseline, since it is closely related to {\sc Maj}.

These out-of-the-box classifiers need help dealing with missing data, whether they are missing naturally at training and test time or strategically at test time, and to this end, we employ 
\begin{enumerate*}[label = (\alph*)]
    \item {\sc Imp}: mean/mode imputation \citep{lessmann2015benchmarking}, and
    \item {\sc R-F}: reduced-feature modeling \citep{saar2007handling},
\end{enumerate*}  
for each of them.

When the dataset has a large number of features, {\sc Mincut} and {\sc IC-LR} can be directly applied. For {\sc HC}, we assist it in two ways: \begin{enumerate*}[label=(\alph*)]
    \item by selecting 4 features based on the training data, denoted by {\sc fs} (feature selection),\footnote{\scriptsize Such a technique can be applied to other methods too -- the results (see the Supplement) are not very different from those in Tables \ref{tab:accuracy,_2_bal,AAAI,FS,disc}.} and
    \item by choosing a limited number of small subsets (30 with 1 feature and 30 with 2 features), denoted by {\sc app} (approximation).
\end{enumerate*} Note that since our proposed methods are truthful, we can assume that features are reported as is. However, for all out-of-the-box classifiers, except {\sc Imp(LR)}, it is infeasible to simulate strategic withholding of feature values, due to the enormous number of combinations of features.

Last but not least, we test all methods with the discretization of continuous features (into categorical ones) \citep{DBLP:conf/ijcai/FayyadI93}, for reasons given in earlier sections.

\subsection{Results} 
For the sake of exposition, we report results only for $\epsilon = 0.2$. We also limit our exposition of {\sc HC}, {\sc Imp} and {\sc R-F} methods to those based on logistic regression, as these perform better than their {\sc ANN/RF/kNN} counterparts. For a comprehensive compilation of all results, along with standard deviation numbers, please refer to the Supplement.

\paragraph{With a small number of features (Table \ref{tab:accuracy,_2_bal,AAAI,FS}):} As expected, the out-of-the-box baselines perform well under truthful reporting, but not with strategic reporting. Our methods are robust to strategic withholding, and in line with the earlier discussion on the potential issues faced by {\sc Mincut} and IC-LR (in Sections \ref{sec:mincut} and \ref{sec:iclr}), we see that
\begin{enumerate*}[label = \textbf{(\alph*)}]
    \item {\sc HC(LR)} performs most consistently, and
    \item in some cases, {\sc Mincut} (e.g., Poland) and {\sc IC-LR} (e.g., Taiwan) do not do well.
\end{enumerate*}

\begin{table}
\centering
\caption{Our methods vs. the rest: mean classifier accuracy for $\epsilon=0.2$, balanced datasets, 4 features }
\fontsize{9pt}{9pt}\selectfont
\setlength{\tabcolsep}{4.3pt}
\begin{tabular}{@{}lcccccccc@{}}\toprule
\multirow{2}{*}[-3pt]{\makecell[l]{Classifier}} & \multicolumn{2}{c}{Australia} &  \multicolumn{2}{c}{Germany} &  \multicolumn{2}{c}{Poland} &  \multicolumn{2}{c}{Taiwan}\\
\cmidrule(l{5pt}r{5pt}){2-3} \cmidrule(l{5pt}r{5pt}){4-5} \cmidrule(l{5pt}r{5pt}){6-7} \cmidrule(l{5pt}r{5pt}){8-9}
& \multicolumn{1}{c}{Tru.} & \multicolumn{1}{c}{Str.} & \multicolumn{1}{c}{Tru.} & \multicolumn{1}{c}{Str.} & \multicolumn{1}{c}{Tru.} & \multicolumn{1}{c}{Str.} & \multicolumn{1}{c}{Tru.} & \multicolumn{1}{c}{Str.}\\ \midrule
{\sc HC(LR)} & .792  & {\bf.792} & .639  & .639 & .659  & .659 & .648 & .648\\
{\sc Mincut} & .770  & .770 & .580  & .580 & .501  & .501 & .652 & {\bf .652}\\

{\sc IC-LR} & .788  & .788 & .654  & {\bf .654} & .639  & .639 & .499 & .499\\
\hdashline

{\sc Imp(LR)} & .796  & .791 & {\bf .663}  & .580 & .{\bf 714}  & {\bf.660} & {\bf .670} & .618\\

{\sc R-F(LR)} & {\bf .808}  & .545 & .631  & .508 & .670  & .511 & .665 & .590\\
\bottomrule
\end{tabular}
\label{tab:accuracy,_2_bal,AAAI,FS}
\end{table}

\paragraph{With discretization (Table \ref{tab:accuracy,_2_bal,AAAI,FS,disc}):} As expected, discretization of numerical features into binary categories improves the performance of {\sc Mincut} and {\sc IC-LR}, for reasons explained in Sections \ref{sec:mincut} and \ref{sec:iclr} respectively. We also see some benefit from discretization for {\sc HC(LR)} when the features are mostly continuous (e.g., Poland), and less so when they are already discrete (e.g., Taiwan).

\begin{table}
\centering
\caption{Our methods vs. the rest: mean classifier accuracy for $\epsilon=0.2$, balanced datasets, 4 features (``w/ disc." stands for ``with discretization of features")}
\fontsize{9pt}{9pt}\selectfont
\setlength{\tabcolsep}{2.5pt}
\begin{tabular}{@{}lcccccccc@{}}\toprule
\multirow{2}{*}[-3pt]{\makecell[l]{Classifier}} & \multicolumn{2}{c}{Australia} &  \multicolumn{2}{c}{Germany} &  \multicolumn{2}{c}{Poland} &  \multicolumn{2}{c}{Taiwan}\\
\cmidrule(l{7pt}r{7pt}){2-3} \cmidrule(l{7pt}r{7pt}){4-5} \cmidrule(l{7pt}r{7pt}){6-7} \cmidrule(l{7pt}r{7pt}){8-9}
& \multicolumn{1}{c}{Tru.} & \multicolumn{1}{c}{Str.} & \multicolumn{1}{c}{Tru.} & \multicolumn{1}{c}{Str.} & \multicolumn{1}{c}{Tru.} & \multicolumn{1}{c}{Str.} & \multicolumn{1}{c}{Tru.} & \multicolumn{1}{c}{Str.}\\ \midrule
{\sc HC(LR)} w/ disc. & .794  & .794 & .641  & .641 & .692  & .692 & .650 & {\bf.650}\\
{\sc Mincut} w/ disc. & .789  & .789 & .629  & .629 & .692  & .692 & .649 & .649\\

{\sc IC-LR} w/ disc. & {\bf.800}  & {\bf .800} & .651  & {\bf.651} & .698  & {\bf .698} & .646 & .646\\\hdashline

{\sc Imp(LR)} w/ disc. & .799  & .762 & {\bf.652}  & .577 & {\bf.719}  & .631 & {\bf .686} & .541\\

{\sc R-F(LR)} w/ disc. & .796  & .542 & .633  & .516 & .708  & .522 & .684 & .587\\
\bottomrule
\end{tabular}
\label{tab:accuracy,_2_bal,AAAI,FS,disc}
\end{table}

\paragraph{With a large number of features (Table \ref{tab:accuracy,_2_bal,AAAI,NFS}):} 
We see broadly similar trends here, except that in the case with discretization, {\sc IC-LR} performs much better than before (e.g., Poland). The reason for this is that {\sc IC-LR} is able to use all the available features once they are discretized into binary categories. However, without discretization, {\sc HC} methods are more reliable (e.g., Poland and Taiwan).

\begin{table}
\centering
\caption{Our methods vs. the rest: mean classifier accuracy for $\epsilon=0.2$, balanced datasets, all features }
\fontsize{9pt}{9pt}\selectfont
\setlength{\tabcolsep}{1.9pt}
\begin{tabular}{@{}lcccccccc@{}}\toprule
\multirow{2}{*}[-3pt]{\makecell[l]{Classifier}} & \multicolumn{2}{c}{Australia} &  \multicolumn{2}{c}{Germany} &  \multicolumn{2}{c}{Poland} &  \multicolumn{2}{c}{Taiwan}\\
\cmidrule(l{7pt}r{7pt}){2-3} \cmidrule(l{7pt}r{7pt}){4-5} \cmidrule(l{7pt}r{7pt}){6-7} \cmidrule(l{7pt}r{7pt}){8-9}
& \multicolumn{1}{c}{Tru.} & \multicolumn{1}{c}{Str.} & \multicolumn{1}{c}{Tru.} & \multicolumn{1}{c}{Str.} & \multicolumn{1}{c}{Tru.} & \multicolumn{1}{c}{Str.} & \multicolumn{1}{c}{Tru.} & \multicolumn{1}{c}{Str.}\\ \midrule
{\sc HCfs(LR)}& .795  & .795 & .625  & .625 & .678  & .678 & .648 & .648\\
{\sc HCapp(LR)} & .777  & .777 & .617  & .617 & .658  & .658 & .638 & .638\\
{\sc Mincut} & .496  & .496 & .499  & .499 & .499  & .499 & .499 & .499\\
{\sc IC-LR} & .798  & .798 & .654  & {\bf .654} & .607  & .607 & .588 & .588\\
\hdashline

{\sc HCfs(LR)} w/ disc. & .794  & .794 & .632  & .632 & .694  & .694 & .649 & .649\\
{\sc HCapp(LR)} w/ disc. & .782  & .782 & .620  & .620 & .724  & .724 & .644 & .644\\
{\sc Mincut} w/ disc. & .534  & .534 & .503  & .503 & .499  & .499 & .550 & .550\\
{\sc IC-LR} w/ disc. & .805  & {\bf .805} & .653  & .653 & .773  & {\bf .773} & .667 & {\bf .667}\\\hdashline

{\sc Imp(LR)} & .802  & .701 & {\bf .663}  & .523 & .729  & .507 & .657 &.501 \\ 
{\sc Imp(LR)} w/ disc. & {\bf .809}  & .723 & .659  & .554 & {\bf .783}  & .503 & {\bf .697} & .501\\
\bottomrule
\end{tabular}
\label{tab:accuracy,_2_bal,AAAI,NFS}
\end{table}

\paragraph{On the out-of-the-box baselines:}
\begin{itemize*}
    \item \emph{Imputation-based methods} 
    are sensitive vis-\'a-vis the mean/mode values used. There is incentive to drop a certain feature if the imputed value is a positive signal. If there are many such features, then these methods perform poorly, as seen in Table \ref{tab:accuracy,_2_bal,AAAI,NFS} (cf. Table \ref{tab:accuracy,_2_bal,AAAI,FS}, Australia). 
    If the imputed values do not give a clear signal (e.g., when the distribution of each feature value is not skewed), there is a high variance in the performance of these methods (see the Supplement). In some cases, the benchmarks perform as well as, or slightly better than, our incentive-compatible classifiers. For example, in Table \ref{tab:accuracy,_2_bal,AAAI,FS}, for the Australia and Poland data sets, the accuracy of {\sc Imp(LR)} and that of {\sc HC(LR)} are within $0.001$ of each other. This happens because the imputed values are, in these cases (but not in most of our other cases), negative indicators of the positive label, and therefore there is generally no incentive to strategically drop features.
    \item \emph{Reduced-Feature modeling}, despite performing well under truthful reporting, allows too many examples to be accepted under strategic reporting, which hurts its performance. This is true especially for smaller $\epsilon$, as each subclassifier has fewer examples to train on, giving several viable options for strategic withholding.
\end{itemize*}

We note here that the variance (in the accuracy achieved) produced by our methods, since they are robust to strategic withholding, is much smaller than that of the baseline methods (exact numbers are deferred to the Supplement).

\newcommand{\ra}[1]{\renewcommand{\arraystretch}{#1}}

\section{Conclusion}

In this paper, we studied the problem of classification when each agent at prediction time can strategically withhold some of its features to obtain a more favorable outcome. We devised classification methods ({\sc Mincut}, {\sc HC} and {\sc IC-LR}) that are robust to this behavior, and in addition, characterized the space of all possible truthful classifiers in our setting. We tested our methods on real-world data sets, showing that they outperform out-of-the-box methods that do not account for the aforementioned strategic behavior.

 An immediate question that follows is relaxing the assumption of having access to truthful training data -- for example, one could ask what the best incentive-compatible classifier is given that the training data consists of best responses to a known classifier $f$; or, one could consider an online learning model where the goal is to bound the overall loss over time. A much broader question for future work is to develop a more general theory of robustness to missing data that naturally includes the case of strategic withholding.
\newpage

\section*{Acknowledgements}

We are thankful for support from NSF under award IIS-1814056.

\section*{Ethics Statement}

The methods presented in this paper are geared towards preventing the strategic withholding of data when machine learning methods are used in real-world applications. This will increase the robustness of ML techniques in these contexts: without taking this issue into account, deployment of these techniques will generally result in a rapid change in the distribution of submitted data due to the new incentives faced, causing techniques to work much more poorly than expected at training time. Thus, there is an AI safety \citep{Amodei2016ConcretePI} benefit to our work. The lack of strategic withholding also enables the collection of (truthful) quality data. Of course, there can be a downside to this as well if the data is not used responsibly, which could be the case especially if the features that (without our techniques) would have been withheld are sensitive or private in nature. 

The other issues to consider in our context are those of transparency and fairness. We assume that the classifier is public knowledge, and therefore, agents can appropriately best-respond. In practice, this might not be the case; however, agents may learn how to best-respond over time if similar decisions are made repeatedly (e.g., in the case of college admissions or loan applications). While US college admission is often a black box, it need not be; many countries have transparent public criteria for university admissions (e.g., the Indian IIT admission system), and the same is true in many other contexts (e.g., Canadian immigration). Of course, transparency goes hand in hand with interpretability, i.e., the classifier must be easily explainable as well, and there could be a trade-off, in principle, between how easy the classifier is to interpret and the accuracy it can achieve. It is also possible that our methods hurt the chances of those with more missing data (similarly to how immigrants without credit history may not be able to get a credit card). This is to some extent inevitable, because if one can get in without any feature, everyone could get in by dropping all features. Therefore, the issue of fairness might arise in the case where some groups systematically tend to have more missing data.

\medskip

\bibliographystyle{plainnat}
\bibliography{main.bib}
\appendix
\onecolumn

\section{Further related work}\label{app:related}
 Our work can, in a way, be thought of as studying an adversarial classification \citep[see, e.g.,][]{vorobeychik2018adversarial} problem 
 -- in particular, a decision-time, white-box, targeted attack on binary classifiers, assuming that the only strategy available to the attacker is to remove feature values, and the attacker's goal is to maximize the number of instances classified as positive. In this regard, what we study is similar in spirit to some of the existing literature \citep{globerson2006nightmare,syed2010semi,dekel2010learning} on adversarial classification. 
 
 For example, \cite{globerson2006nightmare} consider a problem where, at test time, the attacker can set up to a certain number of features (say pixels in an image) to zero for each instance individually in a way that is most harmful to the classifier chosen. To be robust to such attacks, they devise convex programming based methods that avoid depending on small sets of features to learn the class structure. Our work is different in that we take a more game-theoretic approach to designing classifiers (including ensemble-based ones) that are fully resistant to the strategic withholding of features by agents (that prefer being labeled positively). Moreover, we make no assumptions on the actual structure of the feature space. 
 
 Our work is also related to the literature on classification withmissing data (Batista and Monard 2003; Marlin 2008).  We devise methods that can deal with the strategic withholding by agents of some of their features, against a backdrop of missing data caused by natural reasons (e.g., nonstationary distributions of which features are present, or input sensor failures). The {\sc HC} method can be viewed as an ensemble method for missing data \citep{conroy2016dynamic} that is strategy-proof against the aforementioned strategic behavior. We also study the performance of other standard, non-strategic classification methods for missing data in the strategic setting, including predictive value imputation and reduced-feature modeling \citep{saar2007handling}.



\section{Proofs}
\mincut*
\begin{proof}
    First observe that any classifier $f$ can be viewed equivalently as a subset of $\mathcal{X}$, given by
    \[
        \{x \in \mathcal{X} \mid f(x) = 1\}.
    \]
    Below, we use these interpretations, i.e., as a function or a subset, of a classifier interchangeably.
    
    The loss of a truthful classifier $f$ can then be written as
    \begin{align*}
        \ell_\mathcal{D}(f) & = \Pr_{(x, y) \sim \mathcal{D}}[(x \in f \wedge y = 0) \vee (x \notin f \wedge y = 1)] \\
        & = \Pr_{(x, y) \sim \mathcal{D}}[x \in f \wedge y = 0] + \Pr_{(x, y) \sim \mathcal{D}}[x \notin f \wedge y = 1] \\
        & = \sum_{x \in f} \Pr_{(x', y) \sim \mathcal{D}}[x' = x \wedge y = 0] + \sum_{x \notin f} \Pr_{(x', y) \sim \mathcal{D}}[x' = x \wedge y = 1] \\
        & = \sum_{x \in f} \mathcal{D}^-(x) + \sum_{x \notin f} \mathcal{D}^+(x),
    \end{align*}
    where the last line follows from \cref{def:Dplusminus}.
    
    Therefore, our goal is to solve the following optimization problem:
    \[
        \begin{aligned}
            \min_{f \subseteq \mathcal{X}} \qquad & \sum_{x \in f} \mathcal{D}^-(x) + \sum_{x \notin f} \mathcal{D}^+(x) & \\
            \text{s.t.} \qquad & x' \in f \Longrightarrow x \in f & \forall x, x' \in \mathcal{X}\text{ where } x \to x'.
        \end{aligned}
    \]
    
    Consider the following min-cut formulation (using \cref{def:G_dx}):
    Let $G(\mathcal{D}, \mathcal{X})$ be a directed capacitated graph, with vertices $V = \mathcal{X} \cup \{s, t\}$, with edges $E$ and edge capacities $u$ defined as follows:
    \begin{itemize}
        \item For each $x \in \mathcal{X}$, there is an edge $(s, x) \in E$ with capacity $\mathcal{D}^-(x)$, and an edge $(x, t) \in E$ with capacity $\mathcal{D}^+(x)$.
        \item For each pair $x, x' \in \mathcal{X}$ where $x \to x'$, there is an edge $(x', x) \in E$ with capacity $\infty$.
    \end{itemize}
    Observe that each finite-capacity $s$-$t$ cut $(S, \bar{S})$ corresponds bijectively to a truthful classifier $f \coloneqq \mathbbm{1}(x \in \bar{S} \setminus \{t\})$.
    Moreover, the capacity of the cut is given precisely by
    \[
        \sum_{x \in \bar{S} \cap \mathcal{X}} \mathcal{D}^-(x) + \sum_{x \in S \cap \mathcal{X}} \mathcal{D}^+(x) = \sum_{x \in f} D^-(x) + \sum_{x \notin f} D^+(x) = \ell_\mathcal{D}(f).
    \]
    Therefore, any $s$-$t$ min-cut corresponds to an optimal classifier $f^*$, which can be computed ``efficiently'' (i.e., in time $\mathrm{poly}(|\mathcal{X}|)$) using any efficient max-flow algorithm given complete knowledge of $\mathcal{D}$.
\end{proof}

\characterization*
\begin{proof}
    Recall that the revelation principle holds in our setting (as mentioned in Section Section \ref{sec:prelim}, also see Proposition \ref{prop:revelation}).
    It therefore suffices to characterize all direct revelation classifiers.
    For any (not necessarily truthful) classifier $f$, consider its direct revelation implementation $f'$, which maps feature values $x$ to the most desirable label the data point can get by dropping features, i.e.,
    \[
        f'(x) = \max_{x': x \to x'} f(x').
    \]
    We argue below that $f'$ has the desired form.
    
    Observe that depending on which features a data point $x$ has, $f$ can be decomposed into $2^k$ subclassifiers, denoted $\{f_F\}_{F \subseteq [k]}$.
    The label of $x$ is then determined in the following way: let $F_x$ be the set of features possessed by $x$, i.e.,
    \[
        F_x = \{i \in [k] \mid x_i \ne *\}.
    \]
    Then
    \[
        f(x) = f_{F_x}(x).
    \]
    Moreover, observe that (1) $f_F$ effectively depends only on $x|_F$ (i.e., $f_F(x) = f_F(x|_F)$), since $f_F$ only acts on those data points where all features not in $F$ are missing, and (2) without loss of generality, $f_F$ rejects any data point with a missing feature $i \in F$, since $f_F$ never acts on a data point where such a feature $i \in F$ is missing.
    Now consider how $f'$ works on a data point $x$.
    For any $F \subseteq F_x$, by dropping all features not in $F$, $x$ can report $x_|F$.
    Moreover, for any such $F \subseteq F_x$, $f(x|_F) = f_F(x|_F)$.
    $f'$ outputs $1$ for $x$, iff there exists $F \subseteq F_x$, such that $f_F(x|_F) = 1$.
    One can therefore write $f'$ in the following way: for any $x \in \mathcal{X}$,
    \[
        f'(x) = \max_{F \subseteq [k]} f_F(x),
    \]
    as desired.
\end{proof}

\generalization*
\begin{proof}
    Recall that $\bar{\mathcal{H}}$ is defined as the set of all classifiers that can be written as the MAX Ensemble of $n$ classifiers in $\mathcal{H}$. Given the classical VC inequality \citep[e.g.,][Theorem 6.11]{shalev2014understanding}, we only need to bound the VC dimension of $\mathcal{\bar{H}}$, and show that
    \[
        \dvc(\bar{\mathcal{H}}) = O(dn \cdot \log dn),
    \]
    where $d$ is the VC dimension of $\mathcal{H}$.
    To this end, observe that each $f \in \bar{\mathcal{H}}$ is essentially a decision tree with $n + 1$ leaves, where each leaf is associated with a binary label, and each internal node corresponds to a classifier in $\mathcal{H}$.
    To be precise, $f$ can be computed in the following way: for any $x \in \mathcal{X}$, if $f_1(x) = 1$, then $f(x) = 1$; otherwise, if $f_2(x) = 1$, then $f(x) = 1$, etc.
    It is known \citep[see,~e.g.,][Section~5.2]{daniely2015multiclass} that the class of all such decision trees with $n + 1$ leaves, which is a superset of $\bar{\mathcal{H}}$, has VC dimension $O(dn \log dn)$.
    As a result, $\dvc(\mathcal{H}) = O(dn \log dn)$, and the theorem follows.
\end{proof}

\convergence*
\begin{proof}
Given $f = \textsc{Max}_{\{f_1^t, f_2^t, \ldots, f_n^t\}}$, consider a single update step for, say, $f_1^{t}$. 
As in Algorithm \ref{alg:greedy}, define:
\begin{align*}
    S_1 &= \{(x, y) \in \widehat{\mathcal{X}}: f_j^t(x|_{F_j}) = 0, \forall j \neq 1\}, \\
    S_{-1} &= S \setminus S_1.
\end{align*}
Then we perform the update as follows:
\begin{align*}
    f_1^{t+1} = \argmin_ {h \in \mathcal{H}} \sum_{(x, y) \in S_1} |h(x|_{F_1}) - y|.
\end{align*}
Let $f' = \textsc{Max}_{\{f_1^{t+1}, f_2^t, \ldots, f_n^t\}}$. Now, the loss calculated for $f'$ is
\begin{align*}
    \ell_{\widehat{\mathcal{X}}}(f') &= \frac{1}{m} (|S_1| \cdot \ell_{S_1}(f') + |S_{-1}| \cdot \ell_{S_{-1}}(f')) \\
    &= \frac{1}{m} (|S_1| \cdot \ell_{S_1}(f_1^{t + 1}) + |S_{-1}| \cdot \ell_{S_{-1}}(f')) \\
    &= \frac{1}{m} (|S_1| \cdot \ell_{S_1}(f_1^{t + 1}) + |S_{-1}| \cdot \ell_{S_{-1}}(f)) \\
    &\le \frac{1}{m} (|S_1| \cdot \ell_{S_1}(f_1^t) + |S_{-1}| \cdot \ell_{S_{-1}}(f)) \\
    &= \ell_{\widehat{\mathcal{X}}}(f).
\end{align*}
The inequality in the above sequence of steps follows from the fact that $f_j^{t+1}$ accrues a lower loss on $S_1$ than $f_j^{t}$ by definition, and that the classification outcomes for any $(x, y) \in S_{-1}$ is the same for $f$ and $f'$.

If we treat $\ell_{\widehat{\mathcal{X}}}(f)$ as a potential function, we can see that it can only decrease with each step, and therefore, the algorithm has to converge at some point.
\end{proof}
\section{Revelation Principle}

There are many results in the literature on partial verification as to the validity of the revelation principle in various settings \citep{green1986partially,yu2011mechanism,kephart2015complexity,kephart2016revelation}. For our purposes, as mentioned in Section \ref{sec:prelim}, when the reporting structure is given by a partial order (i.e., it is transitive, meaning for any $x_1, x_2, x_3$, $x_1 \to x_2 \text{ and } x_2 \to x_3 \implies x_1 \to x_3$), the revelation principle holds. Below we give a quick proof for why this is the case in our setting. 
\begin{proposition}\label{prop:revelation}
    For any classifier $f: \mathcal{X} \to \{0, 1\}$, there is a truthful classifier $f'$ such that after misreporting, $f$ and $f'$ output the same label for all $x \in \mathcal{X}$, i.e.,
    \[
        f'(x) = \max_{x': x \to x'} f(x').
    \]
\end{proposition}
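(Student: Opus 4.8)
The plan is to take the candidate classifier defined by the formula itself, namely $f'(x) \coloneqq \max_{x' : x \to x'} f(x')$, and verify two things: that $f'$ is truthful, and that the label an agent ultimately receives under $f'$ coincides with the label received under $f$. Since the label space is $\{0,1\}$, this maximum is well defined — it equals $1$ exactly when some $x'$ with $x \to x'$ has $f(x') = 1$ — so $f'$ is a legitimate classifier and there is nothing to check about its being well-posed.

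First I would establish truthfulness. Suppose $x \to x'$. By transitivity of $\to$ (recorded in the preliminaries), every $x''$ reachable from $x'$ is also reachable from $x$, i.e. $\{x'' : x' \to x''\} \subseteq \{x'' : x \to x''\}$. Taking the maximum of $f$ over the larger set can only increase it, so $f'(x') = \max_{x'' : x' \to x''} f(x'') \le \max_{x'' : x \to x''} f(x'') = f'(x)$. Hence $f'(x) \ge f'(x')$ whenever $x \to x'$, which is precisely the definition of a truthful classifier.

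Next I would check that the realized labels agree. Under $f$, a best-responding agent with true point $x$ reports the $x'$ maximizing $f$ over $\{x' : x \to x'\}$, thereby receiving label $\max_{x' : x \to x'} f(x') = f'(x)$ by construction. Under $f'$, the same agent receives $\max_{x' : x \to x'} f'(x')$; but since $f'$ is truthful and $x \to x$ (taking $S = [k]$ in the definition of $\to$), this maximum is attained at $x' = x$ and equals $f'(x)$. Thus both classifiers produce the same label $f'(x)$ for every $x$, and $f'$ is the sought truthful classifier.

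The argument is essentially immediate once transitivity is in hand, so there is no substantial obstacle; the only points requiring care are confirming that $\to$ is reflexive (so that truthful reporting is actually an available action, used in the second step) and transitive (so that the reachable sets nest, used in the first step). Both follow directly from the definition $x \to x' \iff \exists S \subseteq [k] : x|_S = x'$, and it is exactly this order-theoretic structure — rather than any special property of the classifier $f$ — that makes the revelation principle go through here.
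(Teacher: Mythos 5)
Your proof is correct and follows essentially the same route as the paper's: both define $f'(x) \coloneqq \max_{x' : x \to x'} f(x')$ and derive truthfulness from transitivity via the nesting of reachable sets $\{x'' : x' \to x''\} \subseteq \{x'' : x \to x''\}$. The only difference is that you spell out the coincidence of realized labels (using reflexivity of $\to$), a step the paper dismisses as clear, so no substantive divergence.
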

\begin{proof}
    Below we explicitly construct $f'$.
    Let $f'$ be such that for $x \in \mathcal{X}$,
    \[
        f'(x) = \max_{x': x \to x'} f(x).
    \]
    Clearly $f'$ and $f$ output the same label after strategic manipulation.
    We only need to show $f'$ is truthful, i.e., for any $x_1, x_2 \in \mathcal{X}$ where $x_1 \to x_2$, $f'(x_1) \ge f'(x_2)$.
    Let $X_1 = \{x': x_1 \to x'\}$ and $X_2 = \{x': x_2 \to x'\}$.
    Recall that $\to$ is transitive and $x_1 \to x_2$, so $X_1 \supseteq X_2$.
    Now we have
    \[
        f'(x_1) = \max_{x \in X_1} f(x) \ge \max_{x \in X_2} f(x) = f'(x_2). \qedhere
    \]
\end{proof}

Note that the above proof crucially depends on transitivity of the reporting structure.
In fact, if the reporting structure $\to$ is not transitive, then the revelation principle in general does not hold.
To see why this is the case, suppose $\to$ is not transitive, and let $x_1, x_2, x_3$ be such that $x_1 \to x_2$, $x_2 \to x_3$, and $x_1 \not\to x_3$.
Suppose we want to assign label $0$ to $x_1$, and label $1$ to $x_2$ and $x_3$, then the only way to achieve that is to implement a classifier $f$ where $f(x_1) = f(x_2) = 0$ and $f(x_3) = 1$.
However, this classifier is not truthful, since $x_2$ always misreports as $x_3$ in order to be accepted.

\section{Other observations}
\subsection{Regarding {\sc Mincut}}\label{app:mincut}
Naturally, the test error of {\sc Mincut} depends on $\mathcal{X}$ and $m$. For example, If $\mathcal{X}$ is discrete and small, one would expect that {\sc Mincut} is almost optimal given enough samples. However, when $\mathcal{X}$ is large or even infinite, the generalization gap can be extremely large. To see why this is true, consider the following example:

\begin{example}\label{ex:mincut-generalization}
Say we are given a feature space with two features, each of which can take any real value between $0$ and $1$. Let the marginal distribution of $\mathcal{D}$ on $\mathcal{X}$ be the uniform distribution over $\mathcal{X} = \{(x, y), (x, *), (*, y) \mid x, y \in [0, 1]\}$.
When we see a new data point $(x, y)$, unless we already have $(x, y)$, $(x, *)$ or $(*, y)$ in the set of samples (which happens with probability $0$), we know absolutely nothing about the label of $(x, y)$, and therefore by no means we can expect $f'$ to predict the label of $(x, y)$ correctly --- in fact, $f'$ will always assign label $0$ to such a data point.
\end{example}

\subsection{On truthful classifiers and hill-climbing}
Below, we make a few remarks regarding the generalization bound (\cref{thm:generalization}) for {\sc HC}.
\begin{itemize}
    \item Observe that the generalization gap depends polynomially on the number of subclassifiers $n$.
    Without additional restrictions, $n$ can be as large as $2^k$ leading to a gap which is exponential in $k$.     This suggests that in practice, to achieve any meaningful generalization guarantee, one has to restrict the number of subclassifiers used. In fact, we do run our algorithm on a small set of features in Section \ref{sec:experiments}.
    \item Recall that the class of linear classifiers in the $k$-dimensional Euclidean space has VC dimension $k + 1$.
    So, if we restrict all subclassifiers to be linear, and require that the number of subclassifiers $n$ to be constant, then Theorem~\ref{thm:generalization} implies that with high probability, the generalization gap is
    \[
        \widetilde{O}\left(\sqrt{\frac{k}{m}}\right),
    \]
    where $k$ is the number of features, $m$ is the number of samples, and $\widetilde{O}$ hides a logarithmic factor. Our algorithms are practicable in this kind of regime.
\end{itemize}

\section{Experiments}
\subsection{Implementation details}

In our implementation, we use Python's Scikit-learn (0.22.1) package \citep{scikit-learn} of classifiers and other machine learning packages whenever possible. The categorical features in the datasets are one-hot encoded. To help ensure the convergence of gradient-based classifiers, we then standardize features by removing the mean and scaling to unit variance.

For imputation-based classifiers, we use mean/mode imputation: mean for numerical and ordinal features, and mode for categorical features. For reduced-feature-based classifiers, we default to reject if the test data point's set of available features was unseen in the training process. For classification methods involving  Fayyad and Irani's MDLP discretization algorithm, we use a modified version of \texttt{Discretization-MDLPC}, licensed under GPL-3\footnote{\texttt{Discretization-MDLPC} codebase: \url{https://github.com/navicto/Discretization-MDLPC}.}.

The performance of each classifier under each setting is evaluated with Nx2-fold cross-validation \citep{dietterich1998approximate}: training on 50\% of the data and testing on the remaining 50\%; repeat N $=100$ times. To tune the parameters for the classifiers, we perform grid search over a subset of the parameter space considered by \cite{lessmann2015benchmarking}, in a 5-fold cross-validation on every training set of the (outer) Nx2-cross-validation loop. 


\subsection{Additional experimental results}

We evaluate our methods on datasets with 1) 4 selected features, same for all runs (Table \ref{tab:AAAI,accuracy,_0_bal,preFS} to \ref{tab:AAAI,AUC,_5,preFS}), 2) 4 selected features, based on the training data at each run (Table \ref{tab:AAAI,accuracy,_0_bal,FS} to \ref{tab:AAAI,AUC,_5,FS}), and 3) all available features (Table \ref{tab:AAAI,accuracy,_0_bal,NFS} to \ref{tab:AAAI,AUC,_5,NFS}). In addition, we evaluate our methods both with and without balancing the datasets through random undersampling. This is denoted by ``balanced datasets'' when we undersample before the experiment, and ``unbalanced datasets'' when we do not. We vary $\epsilon$ from 0 to .5 and report classifier accuracy and AUC (when applicable).

Comparing Figure \ref{fig:box_preFS_BAL_ACC} and \ref{fig:box_preFS_UNBAL_ACC}, it appears that there is no significant difference in relative accuracy across the various methods when applied to balanced and imbalanced datasets. However, comparing, for example, Table \ref{tab:AAAI,accuracy,_2_bal,preFS} and \ref{tab:AAAI,accuracy,_2,preFS}, we observe that when the dataset is highly unbalanced, classifiers based on imputation and reduced-feature modeling, when faced with strategic reporting, tend to accept everything and yield a considerably high accuracy. Many other general issues regarding the use of accuracy as a metric on unbalanced datasets are known \citep{japkowicz2000aaai,chawla2003icml,chawla2004special}. In practice, thresholding methods are sometimes used to determine a proper threshold for binary prediction in such cases \citep{lessmann2015benchmarking,elkan2001foundations}.

Therefore, in addition to accuracy, we evaluate our approach with area under the receiver operating characteristic curve (AUC). AUC becomes a useful metric when doing imbalanced classification because often a balance of false-positive and false-negative rates is desired, and it is invariant to the threshold used in binary classification.\footnote{... although AUC's global perspective assumes implicitly that all thresholds are equally probable. This is often criticized as not plausible in credit scoring \citep{hand2013area}} For {\sc Mincut}, its receiver operating characteristic curve is undefined because it does not output probabilistic predictions; for {\sc Hill-Climbing}, we take the maximum of the probabilistic predictions across all applicable classifiers to be the {\sc Hill-Climbing} classifier's probabilistic prediction for a data point, and obtain AUC from that. From 
Table \ref{tab:AAAI,AUC,_0,preFS} to \ref{tab:AAAI,AUC,_5,preFS}, we observe that our three proposed methods generally yield a AUC as good as, if not higher than, imputation- and reduced-feature-based classifiers on imbalanced datasets (also Figure \ref{fig:box_preFS_UNBAL_AUC}). The same holds for balanced datasets too (\Cref{fig:box_preFS_BAL_AUC}).

For completeness, we also include the performance of classifiers based on imputation and reduced-feature modeling, with discretization. As expected, common classifiers are generally less prone to overfitting than {\sc Mincut}, and discretizing the feature space only limits their performance.

The number of iteration the training of {\sc Hill-Climbing} classifiers takes to converge varies by dataset, but usually in no more than than 5 passes through all the subclassifiers. For the balanced Australia dataset, for example, {\sc HC(LR)} takes an average of 4.28 iterations to converge (median 3); {\sc HC(LR)} w/ disc. takes an average of 2.45 (median 2); {\sc HC(ANN)}: takes an average of 3.56 (median 3); {\sc HC(ANN)} w/ disc. takes an average of 2.62 (median 2).

\begin{figure}[ht!]
  \caption{Selected classifier accuracy w/ strategic behavior, balanced datasets, pre-selected 4 features}
  \includegraphics[width=\textwidth]{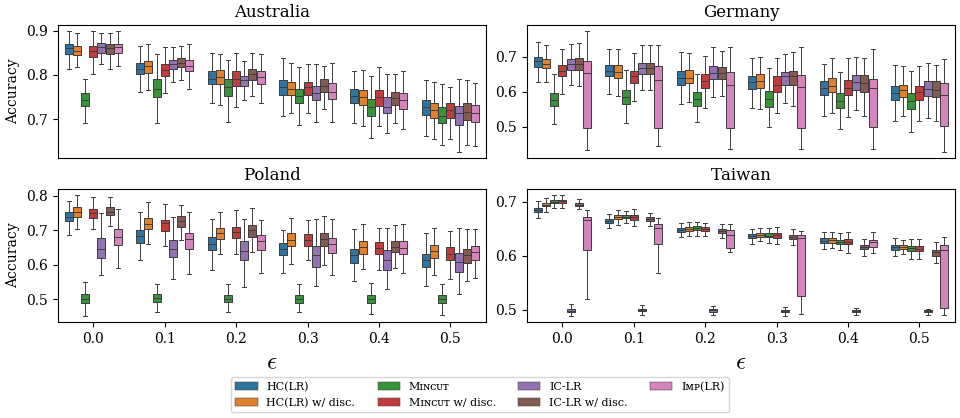}
  \label{fig:box_preFS_BAL_ACC}
\end{figure}
\begin{figure}[ht!]
  \caption{Selected classifier AUC w/ strategic behavior, balanced datasets, pre-selected 4 features}
  \includegraphics[width=\textwidth]{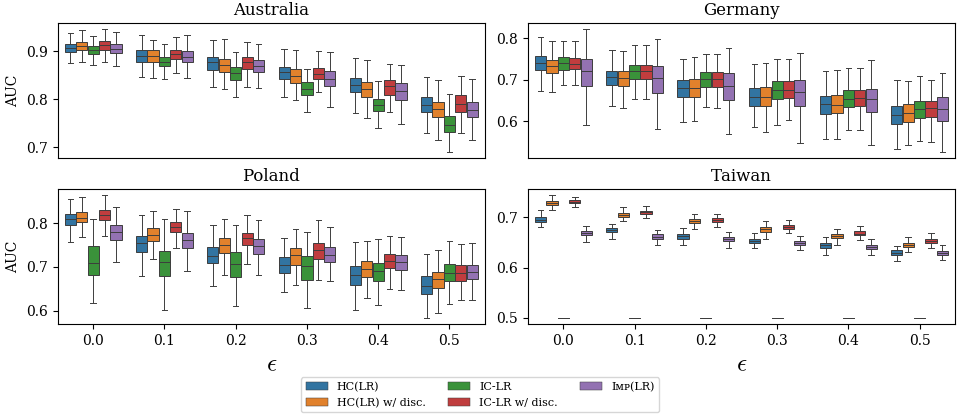}
  \label{fig:box_preFS_BAL_AUC}
\end{figure}
\begin{figure}[ht!]
  \caption{Selected classifier accuracy w/ strategic behavior, unbalanced datasets, pre-selected 4 features}
  \includegraphics[width=\textwidth]{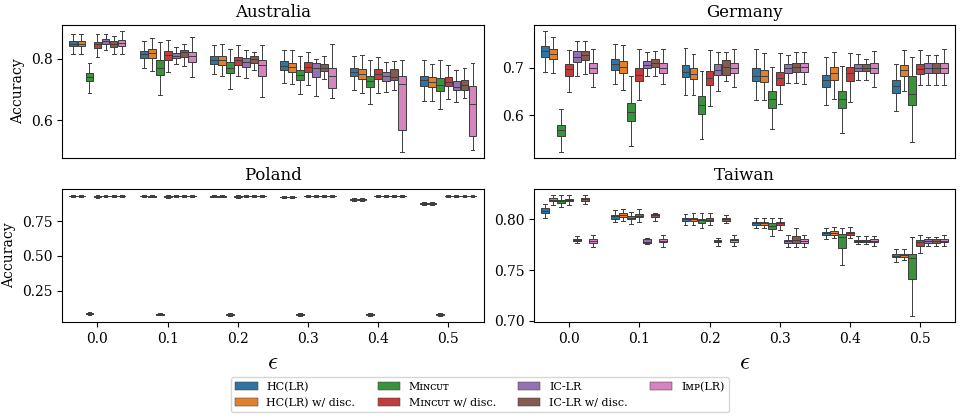}
  \label{fig:box_preFS_UNBAL_ACC}
\end{figure}
\begin{figure}[ht!]
  \caption{Selected classifier AUC w/ strategic behavior, unbalanced datasets, pre-selected 4 features}
  \includegraphics[width=\textwidth]{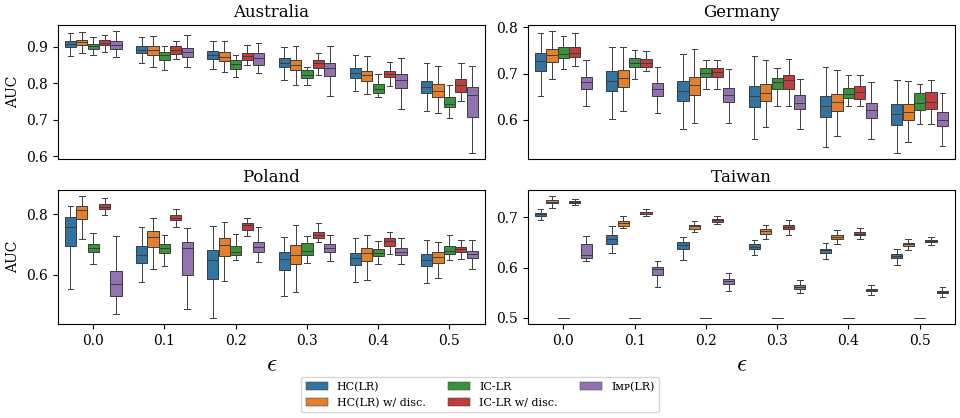}
  \label{fig:box_preFS_UNBAL_AUC}
\end{figure}

\begin{figure}[ht!]
  \caption{Selected classifier accuracy w/ strategic behavior, balanced datasets, each classifier selects 4 best features}
  \includegraphics[width=\textwidth]{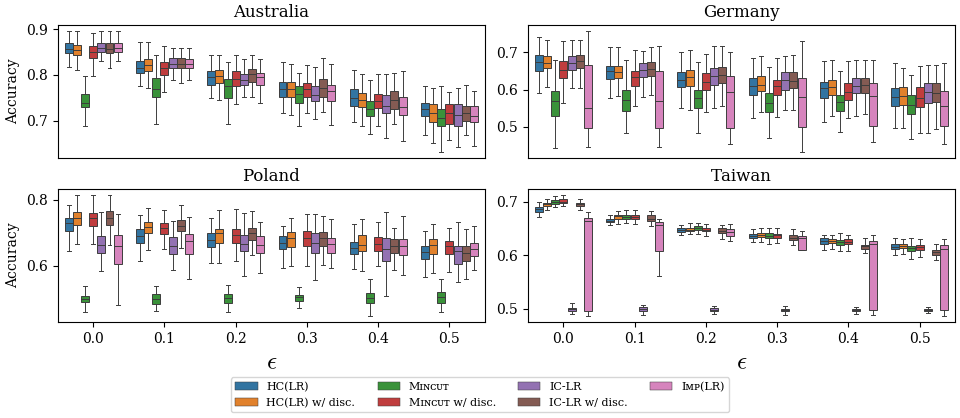}
  \label{fig:box_FS_BAL_ACC}
\end{figure}
\begin{figure}[ht!]
  \caption{Selected classifier AUC w/ strategic behavior, balanced datasets, each classifier selects 4 best features}
  \includegraphics[width=\textwidth]{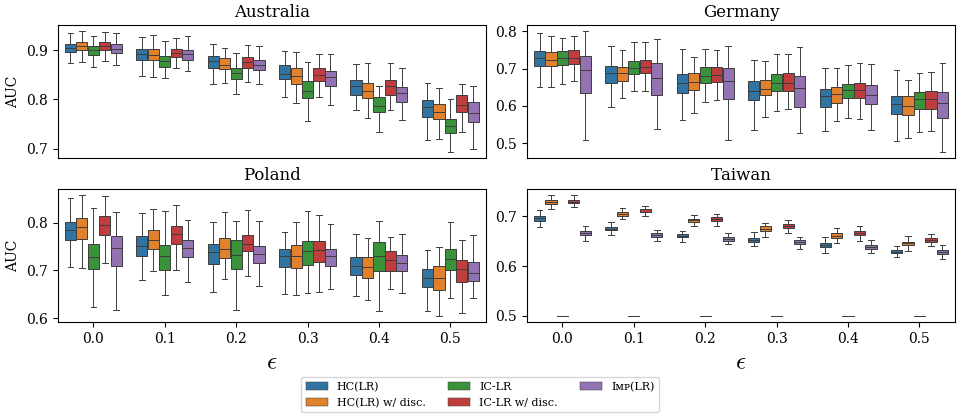}
  \label{fig:box_FS_BAL_AUC}
\end{figure}
\begin{figure}[ht!]
  \caption{Selected classifier accuracy w/ strategic behavior, unbalanced datasets, each classifier selects 4 best features}
  \includegraphics[width=\textwidth]{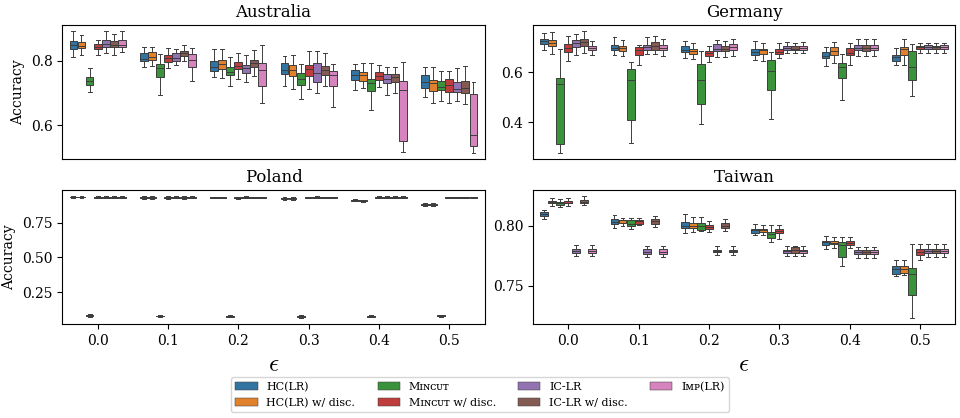}
  \label{fig:box_FS_UNBAL_ACC}
\end{figure}
\begin{figure}[ht!]
  \caption{Selected classifier AUC w/ strategic behavior, unbalanced datasets, each classifier selects 4 best features}
  \includegraphics[width=\textwidth]{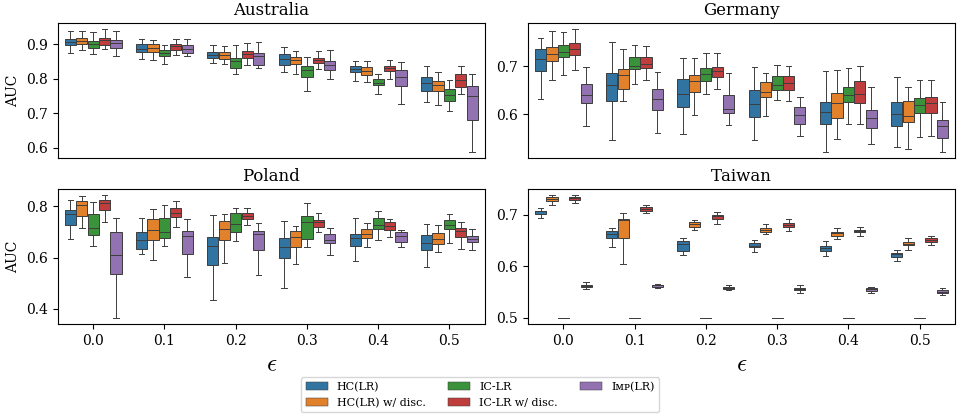}
  \label{fig:box_FS_UNBAL_AUC}
\end{figure}

\begin{figure}[ht!]
  \caption{Selected classifier accuracy w/ strategic behavior, balanced datasets, all features}
  \includegraphics[width=\textwidth]{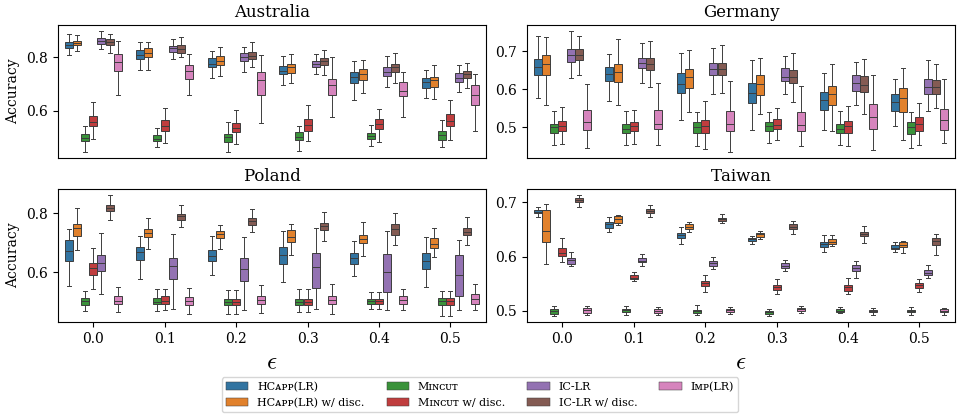}
  \label{fig:box_NFS_BAL_ACC}
\end{figure}\begin{figure}[ht!]
  \caption{Selected classifier AUC w/ strategic behavior, balanced datasets, all features}
  \includegraphics[width=\textwidth]{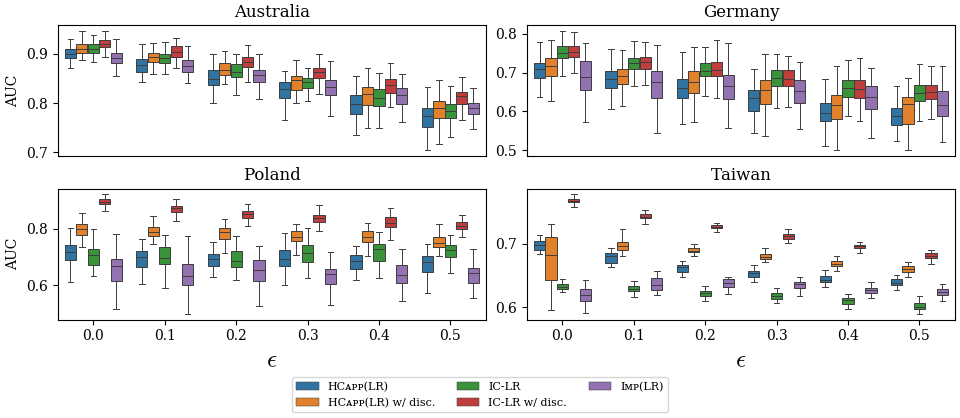}
  \label{fig:box_NFS_BAL_AUC}
\end{figure}\begin{figure}[ht!]
  \caption{Selected classifier accuracy w/ strategic behavior, unbalanced datasets, all features}
  \includegraphics[width=\textwidth]{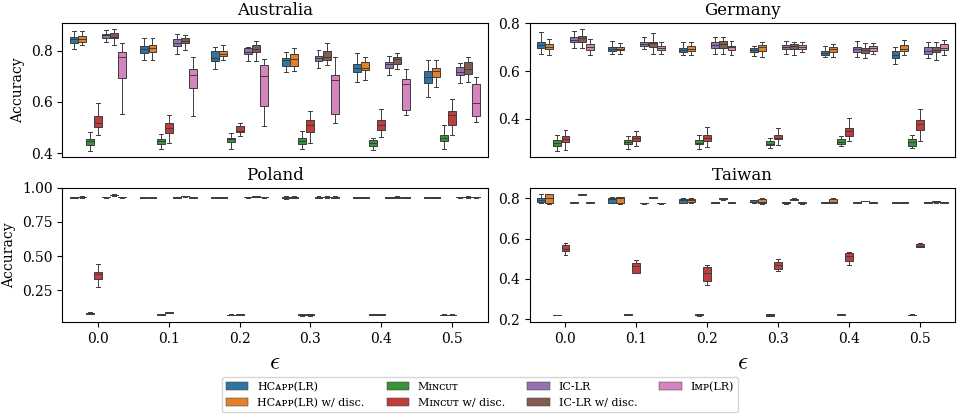}
  \label{fig:box_NFS_UNBAL_ACC}
\end{figure}\begin{figure}[ht!]
  \caption{Selected classifier AUC w/ strategic behavior, unbalanced datasets, all features}
  \includegraphics[width=\textwidth]{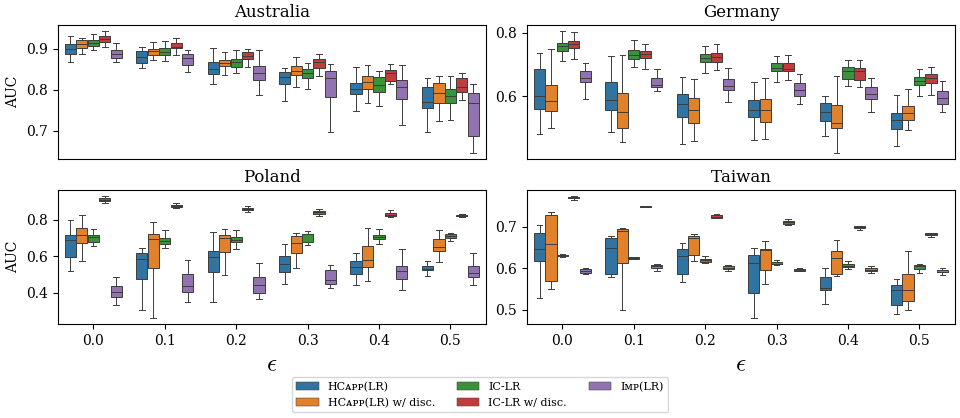}
  \label{fig:box_NFS_UNBAL_AUC}
\end{figure}



\clearpage




\begin{table*}\centering
\caption{Our methods vs. the rest: mean classifier accuracy for $\epsilon=0.0$, balanced datasets, pre-selected 4 features.}
\setlength{\tabcolsep}{2.7pt}

\label{tab:AAAI,AUC,_5,NFS}
\end{table*}

\end{document}